\documentclass[11pt, letterpaper, logo, onecolumn, copyright]{template_from_google}
% We recommend centralizing all package loading here for clarity and to avoid conflicts.
% All \usepackage commands have been moved from 0_math.tex to this main preamble.

% --- FONT AND ENCODING ---
\usepackage[T1]{fontenc}

\usepackage{bm} % For bold math symbols like \bm
\usepackage{nicefrac} % For typesetting compact fractions

% --- LAYOUT, TABLES, AND LISTS ---
\usepackage{booktabs} % For professional quality tables
\usepackage{enumitem} % For list customization
\usepackage{fancyhdr}
\usepackage{multirow} % For multi-row cells in tables
\usepackage{tabularx}
\usepackage{multicol}
\usepackage{array}
\usepackage{longtable}
\usepackage{lipsum} % For placeholder text
\usepackage{wrapfig} % To wrap text around figures
\usepackage{tocloft} % To control the Table of Contents
\usepackage{fancybox}
\usepackage{authblk} % For author affiliations

% --- ALGORITHMS ---
\usepackage{algorithm}
\usepackage[noend]{algpseudocode}
% \usepackage{algorithmic}    % or algpseudocode, but not both

% --- FIGURES, CAPTIONS, AND GRAPHICS ---
\usepackage{graphicx}
\usepackage{caption}
\captionsetup{compatibility=false}
\usepackage{subcaption}
\usepackage{capt-of} % For using \captionof
% \usepackage[inkscapeformat=png]{svg}

% --- CODE LISTINGS ---
% Merged listings packages from main file and 0_math.tex
% \usepackage{listings, listings-rust}
% \usepackage{listingsutf8}

% --- COLORS AND BOXES ---
% Merged options from both files for xcolor and tcolorbox
\usepackage[dvipsnames]{xcolor}
\usepackage[most, breakable, skins]{tcolorbox}
\tcbuselibrary{listings,skins,breakable}

% --- MISCELLANEOUS ---
\usepackage{fontawesome5}
\usepackage{xspace} % For intelligent spacing after macros

% --- BIBLIOGRAPHY ---
\usepackage[authoryear, sort&compress, round]{natbib}

% --- COLORS AND BOXES ---
\usepackage[dvipsnames]{xcolor} % Make sure xcolor is loaded

% --- HYPERLINKS AND REFERENCES ---
% The hyperref package should almost always be loaded last.
\usepackage{hyperref}

% --- Your hyperref setup code goes RIGHT HERE ---
\definecolor{darkblue}{rgb}{0.0, 0.0, 0.6}
\definecolor{darkred}{rgb}{0.7, 0.0, 0.0}
\hypersetup{
  pdffitwindow=true,
  pdfstartview={FitH},
  pdfnewwindow=true,
  colorlinks,
  linktocpage=true,
  linkcolor=darkred,
  urlcolor=darkblue,
  citecolor=darkblue
}

% --- USER-DEFINED COMMANDS AND SETTINGS ---
% These files now should only contain definitions, settings, and custom commands, not \usepackage.

\usepackage{graphicx}
\usepackage{amsmath,amsfonts,amsthm}
\usepackage{bbm}
\usepackage{algorithm}
\usepackage[noend]{algpseudocode}

\usepackage{multirow}
\usepackage{xspace}
\usepackage{rotating}
\usepackage{url}

\usepackage[framemethod=tikz]{mdframed}
\newmdenv[
  backgroundcolor=gray!10,  % light gray background
  linecolor=black,          % border color
  linewidth=1pt,            % border thickness
  roundcorner=5pt,          % rounded corners
  skipabove=10pt,
  skipbelow=10pt,
  innerleftmargin=10pt,
  innerrightmargin=10pt,
  innertopmargin=10pt,
  innerbottommargin=10pt
]{mybox}

\newcommand{\method}{SQS\xspace}

\newtheorem{cond}{Condition}[section]
\newtheorem{theorem}{Theorem}

\newtheorem{lemma}[theorem]{Lemma}
\newtheorem*{lemma-no}{Lemma}

\newtheorem{defi}{Definition}

\newcommand{\E}{\mathbb{E}}

\newcommand{\R}{\mathbb{R}}
\newcommand{\KL}{D_{\mathrm{KL}}}

\DeclareMathOperator*{\argmax}{arg\,max}
\DeclareMathOperator*{\argmin}{arg\,min}

\newcommand{\norm}[1]{\left\|#1\right\|}
\newcommand{\Exp}[1]{\exp\left(#1\right)}

\makeatletter

\renewcommand\paragraph{\@startsection{paragraph}{4}{\z@}%
            {-2.5ex\@plus -1ex \@minus -.25ex}%
            {1.25ex \@plus .25ex}%
            {\itshape\normalsize\bfseries}}
\makeatother

\bibliographystyle{plainnat}
\let\cite\citep

\title{SQS: Bayesian DNN Compression through Sparse Quantized Sub-distributions}
\pagestyle{fancy}
\fancyhf{}
\newcommand{\shorttitle}{SQS}
\fancyhead[L]{\shorttitle}
\fancyhead[R]{\thepage}

\reportnumber{} % Leave blank if n/a

\author[1]{Ziyi Wang}
\author[2]{Nan Jiang}
\author[3]{Guang Lin}

\author[4]{Qifan Song}

\affil[1]{Purdue University, IN, USA}
\affil[2]{University of Texas at El Paso, TX, USA}

\begin{abstract}

Compressing large-scale neural networks is essential for deploying models on resource-constrained devices. Most existing methods adopt weight pruning or low-bit quantization individually, often resulting in suboptimal compression rates to preserve acceptable performance drops.
We introduce a unified framework for simultaneous pruning and low-bit quantization via Bayesian variational learning (\method), which achieves higher compression rates than prior baselines while maintaining comparable performance.
The key idea is to employ a spike-and-slab prior to inducing sparsity and model quantized weights using Gaussian Mixture Models (GMMs) to enable low-bit precision.
%
% Due to the intractability of the objective involving spike-and-slab priors with GMMs, we derive an efficient approximation that facilitates effective compression with minimal accuracy loss.
%
In theory, we provide the consistent result of our proposed variational approach to a sparse and quantized deep neural network. 
Extensive experiments on compressing ResNet, BERT-base, Llama3, and Qwen2.5 models show that our method achieves higher compression rates than a line of existing methods with comparable performance drops. 

% of up to $32\times$ with less than $1.3\%$ accuracy loss on CIFAR datasets and a 1.66-point drop in F1 score on SQuAD.

% Furthermore, we show that Bayesian model averaging reduces quantization-induced noise, resulting in more robust and accurate compressed models.

%  This paper presents a novel method that simultaneously achieves model pruning and low-bit quantization through Bayesian variational inference to effectively compress deep neural networks (DNNs) while suffering minimal performance degradation. 
%  %
% Unlike previous approaches that treat pruning and quantization as separate, sequential tasks, our method explores a unified optimization space, enabling more efficient compression. 
% %

% %
% Additionally, we show that the Bayesian model average of neural networks can further mitigate the impact of quantization noise, leading to more robust compressed models.
% %
% In experiments, our method outperforms existing techniques in both compression efficiency and accuracy retention, offering a promising solution for compressing DNNs. 
% %
% Experiments on CIFAR-10, CIFAR-100, and SQuAD datasets demonstrate that our approach achieves compression rates of up to $32\times$ with less than a $1.3\%$ accuracy loss on the CIFAR datasets and a 1.66 point decrease in F1 score on SQuAD. 

\end{abstract}

\begin{document}

\maketitle

\section{Introduction}
Deep Neural Networks (DNNs) have achieved state-of-the-art performance across a wide range of tasks but at the cost of significantly increased computational and memory requirements~\citep{radford2018improving, xu2020bert, touvron2023llama,kumar2025scaling}, making deployment on resource-constrained devices challenging. Model compression is thus proposed to reduce the size and computational complexity of DNNs while maintaining predictive accuracy, including pruning~\citep{lecun1989optimal, han2015deep}, weight quantization~\citep{courbariaux2015binaryconnect, rastegari2016xnor, frantar2022gptq, lin2024awqactivationawareweightquantization}, knowledge distillation~\citep{Park_2019_CVPR, gou2021knowledge}, and neural architecture search~\citep{liu2018progressive, wang2020apq}.

Among these, \emph{weight pruning} and \emph{low-bit quantization} are particularly effective and widely adopted for compressing DNNs~\citep{buciluǎ2006model, choudhary2020comprehensive,liu2025lowbitmodelquantizationdeep}. 
Weight pruning eliminates redundant or unimportant weights by setting selected weights to zero, thereby reducing the number of active parameters without significantly altering the model architecture~\citep{you2019gate, guo2016dynamic, dong2017learning}. 
On the other hand, quantization reduces the bit-width of numerical representations for inputs, outputs, and weights, by converting high-precision formats (e.g., \texttt{FP32}) to lower-precision alternatives, like \texttt{FP8} or \texttt{BF8}. 
This quantization coarsens the model representation and yields significant reductions in memory footprint and computation overhead. It enhances efficiency in both training and inference across diverse architectures, including ResNet~\citep{banner2018scalable}, Transformers~\citep{sun2019hybrid}, Large language model~\citep{DBLP:conf/nips/DettmersPHZ23,wang2025optimizing}, and vision-language models~\citep{wortsman2023stable}.

However, quantization and pruning inevitably introduce distributional shifts from the original DNNs, often leading to performance degradation~\citep{dong2022finding}. To mitigate this, existing methods adopt conservative compression rates, limiting their applicability to resource-constrained environments~\citep{wang2020differentiable, wang2020apq, bai2022dual, frantar2022optimal, bai2023unified}. How to achieve high compression rates while maintaining acceptable performance remains an interesting question to explore.

\begin{figure*}[!t]
    \centering
    \includegraphics[width=\linewidth]{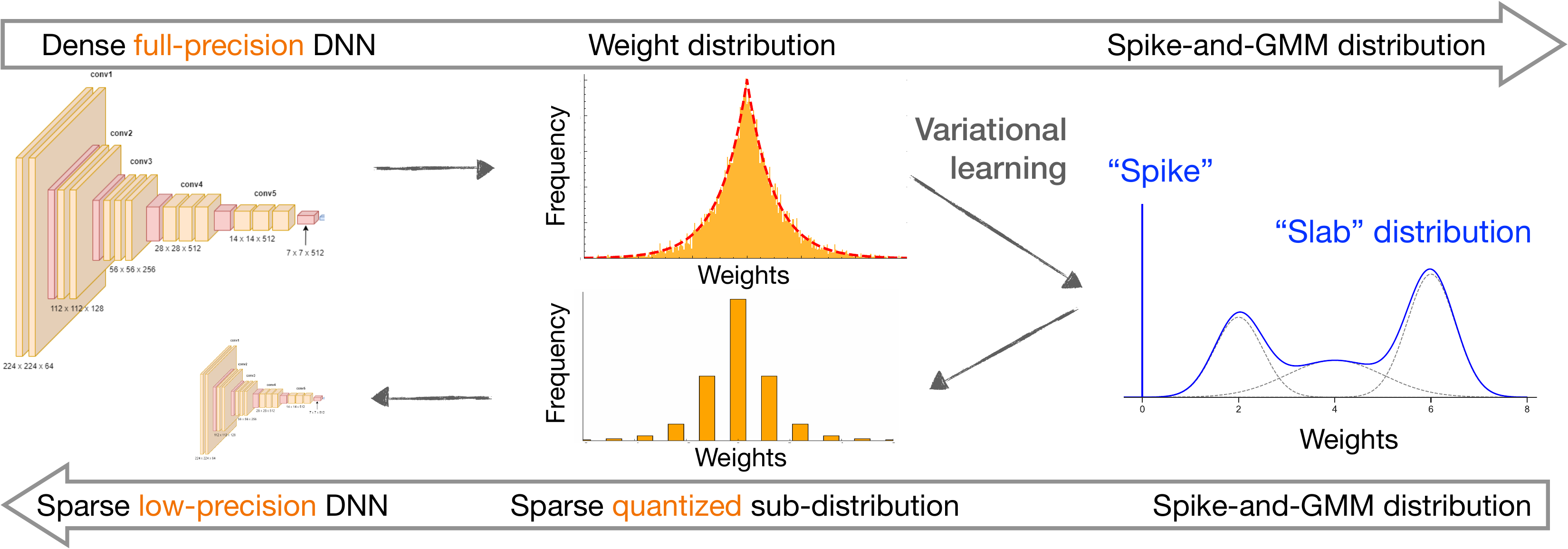}

     \caption{Our \method method achieves high compression with minimal performance degradation by jointly pruning and quantizing model weights through variational learning. We employ a spike-and-GMM variational distribution to approximate full-precision weights: the \textit{spike} component promotes sparsity for pruning, while the \textit{slab} component (i.e., GMM) models a quantized weight distribution.}
	\label{fig:method}
\end{figure*}

To tackle the above problem, we introduce a unified framework: \textbf{S}parse \textbf{Q}uantized \textbf{S}ub-distribution (\method) compression method,  which unifies pruning and quantization within a single variational learning process. 
Instead of applying pruning and quantization separately, the key idea of \method is joint pruning and quantization that learns a sparse, quantized sub-distribution over network weights through variational learning.
To model the variational posterior, we adopt a spike-and-slab prior combined with a Gaussian Mixture Model (GMM): the spike component encourages sparsity for pruning, while the GMM component models a quantized weight distribution, effectively mitigating performance degradation. The training pipeline of \method is in Figure~\ref{fig:method}. Theoretically, we show that under mild conditions, our \method method finds a sparse and quantized neural network that converges to the true underlying target neural network with high probability.

In our experiments, we compare several recent state-of-the-art compression methods across a range of widely used neural networks, including ResNet, BERT-base, LLaMA3, and Qwen2.5. Our findings show that (1) Under the same bit-width setting, our \method achieves the highest compression rate, requiring fewer parameters than baselines. (2) At the same compression rate, our \method achieves the smallest accuracy drop or F1 score drop among all approaches, with particularly strong performance at 2-bit and 4-bit precision.  

% Furthermore, our ablation study demonstrates that 1) the spike-and-slab distribution is more effective in encouraging sparsity than the Gaussian alternative, yielding better scores on sparsity metrics.  2) Our \method with Bayesian average yields a better inference result than using the greedy approach. 3) Our method effectively preserves weight distribution outliers with an outlier-aware window strategy, yielding better performance than the even-window approach. 

Further ablation studies highlight the contributions of individual components:
(1) The spike-and-slab distribution is more effective in promoting sparsity than Gaussian alternatives.
(2) Bayesian averaging during inference outperforms greedy decoding.
(3) An outlier-aware window strategy better preserves informative weight outliers compared to uniform windowing, further improving performance.\footnote{Implementation of \method is available at: \url{https://nan-jiang-group.github.io/SQS/}}

\section{Preliminaries}

\textbf{Low-bit Quantization} uses discrete low-bit values to approximate full-precision floating points, primarily to reduce precision for more efficient storage and computation while preserving essential information~\citep{gholami2022survey}.  Formally, it is defined as a mapping $Q: \theta \in \R \rightarrow \mathcal{Q} = \{\mu_1, \dots \mu_K\}$, where $\theta$ is the full-precision weight and $\mathcal{Q} $ denotes the set of low-bit discrete values.  Representative quantization methods include deterministic quantization~\citep{jacob2018quantization}, stochastic quantization~\citep{courbariaux2015binaryconnect}, and end-to-end learnable quantization~\citep{dong2022finding}.

Specifically, let $\theta=(\theta_1,\ldots,\theta_T)\in\mathbb{R}^T$ represent the full-precision weights of a deep neural network, with $\theta_i$ denoting the $i$-th weight.
Given a quantization set $\mathcal{Q}=\{\mu_1, \dots, \mu_K\}$,  a general stochastic quantization is a  mapping $Q: \theta \rightarrow \mathcal{Q}$, which is
\begin{align*}
    Q(\theta_i) = \mu_k, &\qquad\text{with probability } p_{ki}, 
\end{align*}
for $ i=1,\dots, T$. Here $\mu_k$ is the learnable parameter and $p_{ki}$ is the corresponding probability for weight $\theta_i$ to be quantized to weights $\mu_k$. A key challenge is the distribution divergence between the quantized weights and the original weights, leading to significant performance degradation \citep{dong2022finding}. To mitigate this, \cite{dong2022finding} propose to approximate the quantized weight distribution using a Gaussian Mixture Model (GMM):
\begin{equation}\label{eq:GMM}
Q(\theta_i) \approx \sum_{k=1}^{K} \phi_k(\theta_i; \pi_k) \mathcal{N}(\mu_k, \sigma^2_k),
\end{equation}
where \( \mathcal{N}(\mu_k, \sigma_k^2) \) denotes a Gaussian distribution, and \( \phi_k(\theta_i; \pi_k)
\) is the mixture weight for the Gaussian components $\mathcal{N}(\mu_k, \sigma^2_k)$. 
To control the sharpness of this mixture, a temperature-scaled softmax is applied to obtain \( \phi_k(\theta_i; \pi_k)\), that is: %for $k=1,\ldots, K$,
\begin{align} \label{eq:compute-phi-k}
    \phi_k(\theta_i; \pi_k) =\frac{\Exp{\varphi_k(\theta_i; \pi_k)/\tau}}{\sum_{j=1}^K\Exp{\varphi_j(\theta_i;\pi_j)/\tau}},
\end{align}
where the temperature parameter \( \tau > 0 \) controls the concentration of the distribution. As \( \tau \rightarrow 0 \), the GMM in Equation~\eqref{eq:GMM} approaches a single dominant Gaussian component.
Given a prior distribution $(\pi_1, \dots, \pi_K)$ over the quantization set $\mathcal{Q}$, the posterior component weight 
$\varphi_k(\theta_i; \pi_k)$ is: 
\begin{align*}
 \varphi_k(\theta_i; \pi_k)  = \frac{\exp({\pi_k \mathcal{N}(\theta_i|\mu_k, \sigma^2_k)})}{\sum_{j=1}^{K}\exp({\pi_j\mathcal{N}(\theta_i |\mu_j, \sigma^2_j)})}.
\end{align*}
Additionally, with sufficiently small \( \sigma_k^2 \), the GMM approximates a multinomial distribution over \( \mathcal{Q} \), effectively bridging continuous and discrete quantization.  For simplicity, we denote $\phi_k(\theta_i)$ as a shorthand for $\phi_k(\theta_i; \pi_k)$ throughout the remainder of this paper. 

In our experiments, we find that the GMM-based compression method~\citep{dong2022finding} still cannot achieve a high compression rate while maintaining low-performance drop, as it cannot efficiently encourage sparsity during training.

\textbf{Variational learning.} Given an observed dataset $D$, the goal of a Bayesian framework is to infer the true posterior distribution $\pi( \mathbf{\theta} | D) \propto \pi(\mathbf{\theta}) p(D;\mathbf{\theta})$, where $\pi(\mathbf{\theta})$ denotes the prior and $p(D;\mathbf{\theta})$ the likelihood.
Since the posterior is generally intractable, variational learning~\citep{jordan1999introduction} is proposed to approximate it by selecting the closest distribution from a variational family $\mathcal{F}$ in terms of the Kullback–Leibler (KL) divergence~\citep{csiszar1975divergence}:
\begin{equation}\label{eq:variational}
    q^{*}(\theta) \;=\; \argmin_{q(\theta) \in \mathcal{F}} \,\KL\!\left(q(\theta) \,\|\, \pi(\theta \mid D)\right).
\end{equation}
Following~\cite{blei2017variational}, this optimization is equivalent to minimizing the negative Evidence Lower Bound (ELBO), defined as:
\begin{equation}\label{eq:elbo}
    \Omega(q)  := -\E_{q(\theta)}[\log p(D;{\theta})]+\KL(q(\theta) \| \pi(\theta)),
\end{equation}
where the first term measures how well the variational distribution $q(\theta)$ aligns with the log-likelihood of the observed data, and the second term regularizes $q(\theta)$ to stay close to the prior $ \pi(\theta)$.

Our \method method employs a variational family based on a spike-and-GMM distribution to approximate the sparse and quantized posterior. The first term in Equation~\eqref{eq:elbo} allows the spike-and-GMM to learn the posterior distribution given the data. For the second term, we adopt a slack-and-slab prior $\pi(\cdot)$ distribution to promote sparsity in the network weights. %

\section{Methodology}\label{sec:method} 
The objective is to approximate a full-precision neural network $f(\cdot; \theta)$ with a Bayesian model $f(\cdot; \tilde{\theta})$ that is both sparse and low-precision,
while minimizing performance degradation. 
To achieve this, we employ a spike-and-slab distribution combined with a GMM to parameterize the variational posterior.

\subsection{SQS: Variational learning for sparse and quantized sub-distribution}

% ========
The spike-and-slab prior consists of a point mass at zero (spike) and a continuous distribution (slab)~\citep{bai2020efficient, ishwaran2005spike}. Formally, let $\gamma = (\gamma_1, \ldots, \gamma_T)$ be a binary indicator vector, where each $\gamma_i$ determines whether the corresponding weight $\theta_i$ is preserved ($\gamma_i=1$) or pruned ($\gamma_i=0$). The prior for each weight $\tilde{\theta}_i$ is defined as:
\begin{equation*}
\tilde{\theta}_i \mid \gamma_i \sim \gamma_i \mathcal{N}(0, \sigma_0^2) + (1 - \gamma_i) \delta_0, \quad \gamma_i \sim \mathrm{Bern}(\lambda),
\end{equation*}
where $\lambda$ is the prior probability of retaining a weight, and $\sigma_0^2$ is the prior variance of the Gaussian slab. Marginalizing out the binary variable $\gamma_i$, the prior distribution over $\tilde{\theta}_i$ becomes:
\begin{equation}\label{eq:marginal_prior}
\pi(\tilde{\theta}_i) = \lambda \mathcal{N}(0, \sigma_0^2) + (1 - \lambda) \delta_0,
\end{equation}
where $1 - \lambda$ corresponds to the prior pruning probability. For example, in a DNN with a target sparsity of $90\%$, setting $\lambda = 0.1$ implies that each weight has a $90\%$ prior probability of being pruned.

To incorporate quantization into the variational family, we extend the spike-and-slab formulation by modeling the slab using a $K$-component GMM. Each variational distribution $q(\theta_i)$ is then defined as:
%
% Building upon the GMM quantization in \eqref{eq:GMM}, one natural idea is to combine this distribution with Dirac distribution $\delta_0$ to form a variational family $\mathcal{F}$. 
% That is, any $q(\theta) \in \mathcal{F}$ has the following form: 
\begin{align*}
\gamma_i &\sim \mathtt{Bern}(\tilde{\lambda}_i)\\
    \tilde{\theta}_i | \gamma_i &\sim \gamma_i   \sum_{k=1}^{K} \phi_k(\theta_i) \mathcal{N}(\mu_k, \sigma^2_k)
    +(1-\gamma_i)\delta_0
\end{align*}
where $\phi_k(\theta_i)$ is the mixture weight for component $k$, and $\tilde{\lambda}_i$ is the variational probability of retaining weight $\theta_i$. The marginal variational distribution $q(\tilde{\theta}_i)$ is:
\begin{equation}\label{eq:marginal_vb}
    q(\tilde{\theta}_i)=\tilde{\lambda}_i \sum_{k=1}^{K} \phi_k(\theta_i) \mathcal{N}(\mu_k, \sigma^2_k) + (1-\tilde{\lambda}_i) \delta_0.
\end{equation}
Given this variational family, we define the learning objective based on the ELBO:
\begin{equation}
     \Omega(\tilde{\theta})= -\E_{q(\tilde{\theta})}\big[\log p(D;\tilde{\theta})\big]+\sum_{i=1}^{T}\KL\left(q(\tilde{\theta}_i) \| \pi(\tilde{\theta}_i)\right).\label{eq:exact_ELBO}
\end{equation}
Yet, computing Equation~\eqref{eq:exact_ELBO} is intractable, as no closed-form solution exists for the KL divergence between $q(\tilde{\theta}_i)$ and the spike-and-slab prior $\pi(\tilde{\theta}_i)$.
To overcome this challenge, we propose the following approximation:
\begin{align}\nonumber
      \Omega_{\mathtt{apx}} (\tilde{\theta})= 
      &\log{p\left(D;\tilde{\lambda}_i\sum_{k=1}^{K}\mu_k \phi_k(\theta_i)\right)} \\
      &+\sum_{i=1}^T \KL(\mathtt{Bern}(\tilde{\lambda}_i) \| \mathtt{Bern}(\lambda_i) ) \nonumber\\
&+\sum_{i=1}^{T}\tilde{\lambda}_i\KL(\mathcal{N}(\mu_{k^*},\sigma^2_{k^*}) \| \mathcal{N}(0, \sigma^2_0)),
\label{eq:approx_elbo}
\end{align}
where $k^* = \argmax_{1\le k\le K}  \phi_k(\theta_i)$.
The first term is an approximation of the $\E_{q(\tilde{\theta})}[\log p(D;\tilde{\theta})]$ in  Equation~\eqref{eq:exact_ELBO} by replacing $q(\tilde{\theta})$ with the Delta measure at the mean 
$\E[\tilde{\theta}]=\tilde{\lambda}_i\sum_{k=1}^{K}\mu_k \phi_k(\theta_i;\pi, \tau)$. 
The second and the third terms provide an upper bound of the term $\sum_{i=1}^{T}\KL\left(q(\tilde{\theta}_i) \| \pi(\tilde{\theta}_i)\right)$ by applying Lemma \ref{lem:mix}.
A detailed derivation of Equation~\eqref{eq:approx_elbo} is in Appendix~\ref{apx:obj-approx-theory}.

\textbf{Inference.} 
In the inference stage, we first sample the sparse and quantized weight given the learned parameters and predict the output $\widehat{y}$ for each testing input $x$. Let $\widehat{q}(\cdot) $ denote the optimization solution of the above variational learning, associated with the optimal parameter estimations $\{\hat\theta_i, \hat\mu_i, \hat\sigma^2_i, \hat\lambda_i\}_{i=1}^T$,  and the corresponding $\widehat{\phi}_k$'s (for each $\hat\theta_i$) are obtained from Equation~\eqref{eq:compute-phi-k}. Then the $i$-th quantized weight $\theta_i$
are sampled from the quantization function:
\begin{align}
Q(\tilde{\theta}_i) = \hat{\mu}_k &\qquad \text{with probability } \widehat{\phi}_k(\hat{\theta}_i) \label{eq:infer-theta}
\end{align}
Compared to sampling from $\mathcal{N}(\hat{\mu}_k, \hat{\sigma}^2_k)$, posterior sampling in Equation~\eqref{eq:infer-theta} reduces memory consumption.

To enforce sparsity, we introduce a user-specified pruning parameter, the \texttt{Non-zero} rate. Each weight ($\theta_i$) is associated with a score ($\hat{\lambda}_i$), which reflects the likelihood of being retained. We deterministically prune by setting the (i)-th weight to zero if ($\hat{\lambda}_i$) is smaller than the \texttt{Non-zero}-quantile of all ($\hat{\lambda}_i$) values; otherwise, the weight is kept unchanged. Formally,
\begin{align*}
\theta_i =
\begin{cases}
0, & \text{if } \hat{\lambda}_i < \texttt{Non-zero}\text{ quantile of all } \hat{\lambda}_i, \\
\theta_i, & \text{otherwise}.
\end{cases}
\end{align*}
This deterministic rule provides exact control over the sparsity level, in contrast to stochastic pruning via posterior sampling~\citep{bai2020efficient, sun2022consistent}, which does not guarantee a fixed sparsity rate and often requires an additional pruning step. \

\noindent\textit{Bayesian averaging.}  
Given a test input $x$, the predicted output $\widehat{y}$ is computed using Bayesian averaging:
%last sampled iterate:
\begin{equation} \label{eq:infer-bam}
\widehat{y}=
    \frac{1}{M}\sum_{m=1}^Mf(x;\tilde{\theta}^{m}) 
\end{equation}
where $\tilde{\theta}^m$'s are $M$ many samples from the sparse quantized sub-distribution. 
Our ablation study (Figure~\ref{fig:quantization_compare}) shows that Bayesian averaging consistently yields smaller accuracy degradation than the greedy alternative (Equation~\ref{eq:greedy}, Appendix~\ref{apx:implement}).

\noindent\textit{Outlier-aware windowing.} 
Recent studies show that the weight distribution of large language models (LLMs) often contains significant outliers~\citep{DBLP:conf/nips/WeiZZGZZYL22}.  
To address this, we use an outlier-aware windowing strategy to enhance the performance of \method.
Specifically, the full-precision weights are partitioned into four groups using window sizes determined by a modified $1.5 \times$Interquartile Range (IQR) rule~\cite{dekking2005modern}, which helps preserve large-magnitude weights during quantization.
Each group is then quantized to $K$ representative values.
As shown in the ablation study (Figure~\ref{fig:windowing}), this strategy outperforms the approach using equal-sized windows.
Implementation details are provided in Appendix~\ref{apx:implement}, and the full procedure is summarized in Algorithm~\ref{alg:training} in the Appendix.

\textbf{Remarks.} DGMS~\citep{dong2022finding} adopts a mixture of Gaussians, but uses it primarily as a clustering mechanism. In contrast, our method leverages a principled Bayesian framework that supports posterior inference and enables Bayesian model averaging, enhancing robustness to quantization noise. Furthermore, by unifying pruning and quantization within a spike-and-GMM variational family, our approach creates a joint optimization space that encourages globally optimal solutions across both pruning and quantization.

% \nj{needs more comparisons with other baselines.}

\subsection{Theoretical Justification of SQS method}\label{sec:theory}

For clarity, this section focuses on regression tasks with fully connected neural networks and shows that the variational posterior of sparse and quantized neural networks,  i.e., the optimization of Equation~\eqref{eq:exact_ELBO}.  We show that this variational posterior converges to a true regression function under some mild conditions. 

Consider a regression problem with random covariates, 
\begin{equation}\label{eq:regression}
    Y_i=f_0(X_i)+\varepsilon_i, \quad \text{ for } i=1,\ldots, n,
\end{equation}
where $f_0: [0, 1]^p \rightarrow \mathbb{R}$ is the underlying unknown true function, $X_i \sim \mathcal{U}([0, 1]^p)$ sample from $p$-dimensional uniform distribution, $\epsilon_i\overset{iid}{\sim}\mathcal{N}(0,\sigma^2_{\epsilon})$ is the noise term from Gaussian distribution of zero mean and variance $\sigma^2_{\epsilon}$. Let $P_0$ denote the true underlying probability measure of the data, and $p_0$ denote the corresponding density function. A $L$-hidden layer fully connected NNs with constant layer width $N $ and parameters $W_i \in \mathbb{R}^{N \times N}$, $b_i \in \mathbb{R}^{N}$ and activation function $\sigma(\cdot)$ can be defined as: 
\begin{equation}\label{eq:model}
     f_{\theta}(X)=W_{L+1}\sigma_{b_{L}}(W_{L}\sigma_{b_{L-1}}\ldots \sigma_{b_1}(W_1X)) + b_{L+1}.
\end{equation}

For simplicity, $\sigma_\epsilon$ is assumed to be known. Let $s^*$ be the ``oracle'' sparsity level (see Equation~\ref{eq:def_sstar} in Appendix \ref{apx:converge-obj} for formal definition) and $H(T, s, K)$ be the set of network weight parameters such that the network has a sparsity of $s$ and shares at most $K$ distinct values. Let $P_0$ and $P_\theta$ be the true data distribution and the distribution under parameter $\theta$, respectively.
\begin{theorem}\label{thm:convergence}
Let   $r_n^* = \left((L+1)s^*/n\right)\log N + (s^*/n \log(p\sqrt{n/s^*}))$,
 $\varepsilon_n^*=\sqrt{r_n^*}\log^\delta(n)$ for any $\delta > 1$, and 
 $ \xi_n^* = \inf_{\theta \in H(T, s, K), \norm{\theta}\leq B} \norm{f_{\theta}-f_0}_{\infty}^2$. Then, under mild conditions specified in the supplementary material, with high probability:
    \begin{equation}
        \int_{\R^{T}} d^2(P_{\theta}, P_0) \widehat{q}(\theta)d\theta \leq C \varepsilon_n^{*2} + C'(r_n^*+\xi_n^*),
    \end{equation}
where $d(\cdot,\cdot)$ denotes the Hellinger distance, and $C$ and $C'$ are some constants. \begin{proof}[Sketch of Proof]
Based on prior work~\citep{bai2020efficient}, the proof proceeds in two steps.  
Lemma~\ref{lem:bound_elbo} establishes a high-probability bound on the ELBO in Equation~\eqref{eq:exact_ELBO}.  
Lemma~\ref{lem:lmbound1} connects this bound to the convergence of the variational distribution toward the true full-precision posterior.  
Together, these results show that the variational posterior induced by our method converges to the true regression function with high probability.  
The full proof is in Appendix~\ref{apx:converge-obj}.
\end{proof}
\end{theorem}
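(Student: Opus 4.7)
The plan is to follow the standard two-step PAC-Bayes strategy for variational posterior contraction, adapted to the spike-and-GMM family. First I would construct an ``oracle'' variational distribution $q^\dagger$ that serves as a certificate: pick a parameter $\theta^* \in H(T, s^*, K)$ with $\norm{\theta^*} \leq B$ that nearly attains $\xi_n^*$, and let $q^\dagger$ concentrate around $\theta^*$ by setting the Bernoulli probabilities $\tilde\lambda_i$ close to $1$ on the support of $\theta^*$ and close to $0$ elsewhere, identifying the $K$ Gaussian means $\mu_k$ with the distinct values appearing in $\theta^*$, taking variances $\sigma_k^2$ of order $1/n$, and making the softmax mixture weights $\phi_k$ concentrate on the component matching each $\theta^*_i$. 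Since $\widehat{q}$ minimizes $\Omega$, we have $\Omega(\widehat{q}) \leq \Omega(q^\dagger)$, and I would bound the right-hand side by controlling (i) the expected log-likelihood gap $\E_{q^\dagger}[-\log p(D;\tilde\theta)] + \log p(D; \theta^*)$ via Taylor expansion in $\theta$ around $\theta^*$, producing an $O(n \xi_n^*)$ approximation contribution plus an $O(s^*)$ variance contribution from $q^\dagger$; and (ii) the KL term $\sum_i \KL(q^\dagger(\tilde\theta_i) \| \pi(\tilde\theta_i))$, which splits (using the bound of Lemma~\ref{lem:mix}) into a Bernoulli KL of order $s^* \log(T/s^*)$ and a Gaussian slab KL of order $s^* \log N$, totalling $O(n r_n^*)$ with the prescribed rates.

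Second, I would translate this high-probability ELBO bound into a bound on the integrated Hellinger risk via a Ghosal--van der Vaart style testing argument, as formalized in Lemma~\ref{lem:lmbound1}. The ingredients are: (a) a sieve $\mathcal{F}_n \subset \R^T$ of controlled metric entropy so that for every $\theta \in \mathcal{F}_n$ with $d(P_\theta, P_0) > \varepsilon_n^*$ there exists an exponentially powerful test $\varphi_n$ separating $P_0$ from $P_\theta$; (b) prior mass control ensuring $\pi(\mathcal{F}_n^c)$ is negligible relative to the ELBO bound, which follows from the spike component of the prior giving exponentially small probability to dense parameters; and (c) a Donsker--Varadhan duality argument that re-expresses $\int d^2(P_\theta, P_0)\,\widehat{q}(\theta)\,d\theta$ through the change-of-measure inequality $\E_{\widehat{q}}[g] \leq \KL(\widehat{q}\|\pi) + \log \E_\pi[e^{g}]$, with $g$ chosen in terms of the test $\varphi_n$ and the likelihood ratio. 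Combining this with the Step~1 bound $\Omega(\widehat{q}) = O(n(r_n^* + \xi_n^*))$ and absorbing the $\log^\delta n$ slack into $\varepsilon_n^{*2}$ yields the claimed $C \varepsilon_n^{*2} + C'(r_n^* + \xi_n^*)$.

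The main obstacle, compared to the pure spike-and-slab analysis of prior work, is the quantization coupling: the variational family shares the $K$ Gaussian component parameters $\{\mu_k, \sigma_k^2\}$ and the softmax weights $\phi_k$ across all $T$ coordinates, so $q^\dagger$ is not fully factorized and the KL against the spike-and-slab prior lacks a clean closed form. I expect the hardest step will be controlling this GMM-to-spike-and-slab KL; my approach is to exploit the approximate ELBO of Equation~\eqref{eq:approx_elbo}, which replaces the GMM slab by the single dominant Gaussian $\mathcal{N}(\mu_{k^*}, \sigma^2_{k^*})$, together with a sufficiently small $\sigma_k^2$ making the soft assignments $\phi_k$ essentially deterministic. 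This reduces the coupled KL to a sum of single Gaussian-to-slab KL contributions amenable to the standard bound, while the plug-in mean approximation in the likelihood term costs only an $O(s^*/n)$ correction that is absorbed into $r_n^*$.
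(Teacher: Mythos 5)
Your two-step plan mirrors the paper's proof: Lemma~\ref{lem:bound_elbo} builds a concentrating oracle $q^*$ around $\theta^*\in H(T,s^*,K)$ and bounds $\KL(q^*\|\pi)+\int l_n\,q^*$ by $Cn(r_n^*+\xi_n^*)$, and Lemma~\ref{lem:lmbound1} converts this into the Hellinger-risk bound via the Donsker--Varadhan characterization (Lemma~\ref{lmdonsker}) together with a sieve decomposition and testing argument inherited from \cite{bai2020efficient}. Two places in your sketch are imprecise, though. First, taking the oracle variance merely of order $1/n$ does not close the key bound $\int\|f_\theta-f_{\theta^*}\|_\infty^2\,q^*(d\theta)\lesssim r_n^*$: weight perturbations compound multiplicatively through $L$ layers, so the paper must choose $\sigma_n^2\asymp (2BN)^{-2L}\,s^*/(n\log N)$, i.e.\ exponentially small in depth, not polynomially small in $n$. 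Second, your ``main obstacle'' paragraph overstates the coupling: $\{\mu_k,\sigma_k^2\}$ are shared \emph{hyperparameters}, not latent variables, so the variational posterior is still a product of per-coordinate marginals, $\KL(q\|\pi)$ decomposes additively over $i$, and the paper bounds the per-coordinate GMM-to-Gaussian KL directly via Lemma~\ref{lem:mix} keeping the full $\phi_k$-weighted sum; the theorem is stated and proved for the exact objective~\eqref{eq:exact_ELBO}, so detouring through the practical approximation~\eqref{eq:approx_elbo} as you suggest would require an extra (and unnecessary) step relating the two.
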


\textbf{Remark.}
Similar to previous Bayesian sparse DNN results~\citep{bai2020efficient,cherief2020convergence}, the convergence rate of variational Bayes is determined by the deep neural network structure via 1) statistical estimation error $\varepsilon_n^*$, 2) variational error $r_n^*$, and 3) approximation error $\xi_n^*$. The first two positively relate to the network capacity, while the third one negatively relates to the network capacity. The estimation error $\varepsilon_n^*$ and variational error $r_n^*$ vanish as $n \to \infty$. Prior work~\citep{beknazaryan2022function} shows that under $B \geq 2$, $K \geq 6$, and $\beta$-H\"older smoothness of $f_0$, the approximation error $\xi_n^*$ also vanishes.

While the theoretical analysis mainly considers an $L$-hidden layer fully connected NN with constant layer width $N$, our method $\method$ is empirically validated on a variety of models such as ResNets, BERT-based models, and LLMs (refer to Section \ref{sec:experiments}). 
\section{Related Works}

\textbf{Weight pruning} was initially introduced by~\cite{lecun1989optimal}, with further development by \cite{hassibi1993optimal} through a mathematical method known as the Optimal Brain Surgeon (OBS). This approach selects weights for removal from a trained neural network using second-order information. 
Subsequent improvements, as indicated by studies~\citep{dong2017learning, wang2019eigendamage, singh2020woodfisher}, have adapted OBS for large-scale DNNs by employing numerical techniques to estimate the second-order information required by OBS.  
Meanwhile, \cite{louizos2018learning} has introduced an $L_0$-regularized method to promote sparsity in DNNs. \cite{frankle2018the} established a critical insight that within a randomly initialized DNN, an optimal sub-network can be identified and extracted. 
Recently, \cite{xia2024sheared} showed that structured pruning combined with targeted retraining can significantly reduce computational costs while preserving robust performance for large language models.
Concurrently, spike-and-slab distributions have been employed to promote sparsity in DNNs using Bayesian Neural Networks formulation~\citep{deng2019adaptive, blundell2015weight, bai2020efficient}.

\noindent\textbf{Low-bit quantization.} Quantization improves DNN efficiency, particularly in resource-constrained environments~\citep{sze2017efficient, frantar2022gptq, lin2024awqactivationawareweightquantization,DBLP:conf/aistats/LinGJBP25}. Research in this field typically follows two paradigms: discontinuous-mapping and continuous-mapping quantization.
Discontinuous-mapping methods project full-precision weights onto a low-bit grid using rounding operations~\citep{gupta2015deep, hubara2018quantized, wu2018training, louizos2018relaxed, courbariaux2015binaryconnect, de2018high, marchesi1993fast}. The non-differentiability of these mappings necessitates the use of the \textit{straight-through estimator} (STE) for gradient approximation~\citep{Courbariaux2016BinaryNet, rastegari2016xnor}. However, STE-based training may introduce pseudo-gradients, leading to training instability~\citep{yin2018understanding}.
Meanwhile, many researchers propose post-training quantization methods that have limited access to the training dataset~\citep{wang2020towards, hubara2021accurate, li2021brecq, frantar2022optimal, frantar2022gptq, lin2024awqactivationawareweightquantization}.

\begin{table*}[!t]
    \centering
       \caption{For compressing ResNet models, we benchmark all methods evaluated on the CIFAR-10 dataset. Using ResNet-32 and ResNet-56 models, our \method consistently achieves higher compression rates with smaller Top-1 Accuracy drops compared to all baselines.}
       % \vspace{-0.5em}
    \label{tab:cifar10}
    \begin{tabular}{lrrrrrr}
    \toprule
    \multirow{5}{*}{\rotatebox[origin=c]{90}{ResNet-20}} & Methods & Comp.  & Bits & Non-zero & Compression   & Top-1 Accuracy   \\
    & & type &  & rate (\%)  & rate & drop \\
    \cline{2-7}
    & LQNets~\citep{zhang2018lq}& Q & $2$ & $100\%$ & $16\times$ & $1.20\%$ \\
    & DGMS~\citep{dong2022finding} & P+Q & $2$ & $56\%$ & $29\times$ & $\mathbf{0.87\%}$\\
     & \method (Ours) & P+Q & $2$ & $\mathbf{50\%}$ & $\mathbf{32\times}$ & $1.47\%$ \\
    \midrule
     \multicolumn{7}{c}{\textbf{(a)} Compressing 32Bits ResNet-20 model on CIFAR-10 dataset with Top-1 Accuracy $92.60\%$.} \\
     \midrule
    \multirow{5}{*}{\rotatebox[origin=c]{90}{ResNet-32}}  & Methods & Comp.  & Bits & Non-zero & Compression  & Top-1 Accuracy \\
    & & type &  & rate (\%)  & rate & drop \\
    \cline{2-7} 
    % \cmidrule(lr){2-7} 
    & TTQ~\citep{zhu2017trained} & Q & $2$ & $100\%$ & $16\times$ & $1.90\%$ \\
    & DGMS~\citep{dong2022finding} & P+Q & $2$ & $59\%$ & $27\times$ & $1.30\%$ \\
    & \method (Ours) & P+Q & $2$ & $\mathbf{50\%}$  & \textbf{32$\times$} & $\mathbf{1.29\%}$ \\
    \midrule
    \multicolumn{7}{c}{\textbf{(b)} Compressing 32Bits ResNet-32 model on CIFAR-10 dataset with Top-1 Accuracy $93.53\%$.} \\
    \midrule
    \multirow{6}{*}{\rotatebox[origin=c]{90}{ResNet-56}} & Methods & Comp.  & Bits &  Non-zero  & Compression   & Top-1 Accuracy  \\
    & & type &  & rate (\%)  & rate & drop \\
    \cline{2-7} %
    % \cmidrule(lr){2-7}
    & TTQ~\citep{zhu2017trained} & Q & 2 & $100\%$ & 16$\times$ & $1.06\%$ \\ 
    & L1~\citep{li2017pruning} & P & 32 & $10\%$ & 10$\times$ & $1.83\%$ \\
    & DGMS~\citep{dong2022finding} & P+Q & $2$ & $52\%$ & $31\times$ & $0.89\%$ \\
     & \method (Ours) & P+Q & 2 & $\mathbf{50\%}$ & $\mathbf{32\times}$ & $\mathbf{0.84\%}$  \\
    \bottomrule
     \multicolumn{7}{c}{\textbf{(c)} Compressing 32Bits ResNet-56 model on CIFAR-10 dataset with Top-1 Accuracy $94.37\%$.}
    \end{tabular}
\end{table*}

Continuous-mapping quantization offers an alternative that avoids pseudo-gradients, leading to more stable training~\citep{yin2018understanding,DBLP:conf/acl/NielsenSG25}.
These methods often use variational learning~\citep{ullrich2017soft, louizos2017bayesian, shayer2018learning} or Markov Chain Monte Carlo techniques~\citep{roth2018bayesian} to approximate discrete weight distributions.
However, variational methods often require manual prior specification~\citep{ullrich2017soft, louizos2017bayesian, shayer2018learning}, while MCMC approaches can be memory-intensive~\citep{roth2018bayesian}.
DGMS~\citep{dong2022finding} addresses these limitations through automated quantization using GMMs. Our work extends DGMS by integrating pruning and quantization into a unified framework, thereby achieving higher compression rates. 

\section{Experiments}
\label{sec:experiments}
In this section, we show that our \method achieves a much higher compression rate (see the second-to-last column in Tables~\ref{tab:cifar10}-\ref{tab:sst}) while incurring a comparable or smaller performance drop (see the last column in the same tables). Through ablation studies, we further validate that (1) under the same sparsity level, the spike-and-slab prior more effectively preserves model accuracy (see Table~\ref{tab:prior_compare}). (2) Under identical hyperparameter settings, we show that \method with Bayesian averaging is better than using the greedy approach (see Figure~\ref{fig:quantization_compare}).

\subsection{Experiment settings}
We evaluate all methods using two metrics: the \emph{compression rate} and the \emph{performance drop} (i.e., Accuracy drop or F1 score drop).  
The compression rate is defined as the memory footprint of the compressed over the original dense full-precision model:  
\begin{align}
\text{Compression rate}=\frac{32}{\texttt{Bits}}\times \frac{1}{\texttt{Non-zero}},
\end{align}
where \texttt{Bits} indicates the number of low-precision bits needed to represent the model, for \method the \texttt{Bits} is equal to $\log_2(K)$. 
\texttt{Non-zero}  is the proportion of non-zero weights over the total weights. In all our experiments, the \texttt{Non-zero} is configured as a hyperparameter to control the sparsity.

To ensure fair comparison, each method is initialized with the same full-precision pre-trained model and is run with the same set of hyperparameters for compression.  All methods are constrained to a maximum runtime of 24 hours. The resulting compressed models are then evaluated on the same test sets, and the key performance metrics are summarized in the corresponding tables.

We compare methods of different compression types (the ``Comp. type'' column): ``P+Q'' denotes combined pruning and quantization, ``P'' denotes pruning only, and ``Q'' denotes quantization only.
Appendix~\ref{apx:exp-set} provides detailed experimental configurations and baseline settings.

\begin{table*}[!t]
    \centering
       \caption{Compressing 32Bits Bert-base model on SQuADv1.1 dataset with F1 score $88.68\%$. Our \method achieves higher compression rates with smaller F1 score drops compared to all baselines.}
       % \vspace{-.5em}
    \label{tab:bert_squad}
    % \resizebox{\linewidth}{!}{%
    \begin{tabular}{crrrrrr}
    \toprule
   \multirow{9}{*}{\rotatebox[origin=c]{90}{BERT-base}} &Methods & Comp. & Bits  &  Non-zero  & Compression   &F1 score  \\
   & & type &  & rate (\%)  & rate & drop \\
     % \cline{2-7} %
    \cmidrule(lr){2-7}
    & GMP~\citep{zhu2017prune} & P & $32$ & $50\%$ & $2\times$ & $22.89$ \\
    & L-OBS~\citep{dong2017learning} & P & $32$ & $50\%$ & $2\times$ & $10.86$ \\
    & ExactOBS~\citep{frantar2022optimal} & P & $32$ & $25\%$ & $4\times$ & $6.43$ \\
    & PLATON~\citep{zhang2022platon} & P & $32$ & $20\%$ & $5\times$ & $2.20$ \\
    & OBQ~\citep{frantar2022optimal} & Q & $3$ & $100\%$ & $11\times$ & $3.24$ \\ 
    & GPTQ~\citep{frantar2022gptq} & Q & $3$ & $100\%$ & $11\times$ & $2.51$ \\
    & OBC~\citep{frantar2022optimal} & P+Q & $4$ & $50\%$ & $16\times$  & $2.33$  \\
     & \method (Ours) & P+Q & 4 & $\mathbf{25\%}$ & $\mathbf{32\times}$ & $\mathbf{1.66}$ \\
    \bottomrule
    \end{tabular}%
    % }
\end{table*}
\begin{table*}[!t]
    \centering
   \caption{Compression results for LLAMA3.2 and Qwen2.5 models on the SST-2 dataset. Our \method achieves significantly higher compression rates than AWQ while maintaining comparable ($\le 3\%$) performance drops.}
    \label{tab:sst}
    \begin{tabular}{lrrrrrr}
    \toprule 
     \multirow{5}{*}{\rotatebox[origin=c]{90}{Llama3.2}} & Methods &  Comp.  & Bits & Non-zero  & Compression  & Top-1 accuracy \\
     & & type &  & rate (\%)  & rate & drop \\
   % \cline{2-7} % 
   \cmidrule(lr){2-7}
    & AWQ~\citep{lin2024awqactivationawareweightquantization} & P & $4$ & $100\%$ & $8\times$ & $\mathbf{0.46\%}$ \\
     & DGMS~\citep{dong2022finding} & P+Q & $6$ & $82\%$ & $7\times$ & $46.67\%$ \\
    & \method (Ours) & P+Q & $6$ & $\mathbf{25\%}$ & $\mathbf{21\times}$ & $1.48\%$ \\ 
    \midrule
    \multicolumn{7}{c}{\textbf{(a)} Compressing 32Bits Llama3.2-1B model on SST-2 dataset with Top-1 Accuracy $94.72\%$.} \\
    \midrule
   \multirow{5}{*}{\rotatebox[origin=c]{90}{Qwen2.5}}  & Methods &  Comp.  & Bits & Non-zero & Compression  & Top-1 accuracy \\
   & & type &  & rate (\%)  & rate & drop \\
    % \cline{2-7} %
    \cmidrule(lr){2-7}
    &  AWQ~\citep{lin2024awqactivationawareweightquantization} & P &  4 & 100\% & $8\times$ & $\mathbf{1.54\%}$ \\
    &  DGMS~\citep{dong2022finding} & P+Q & $6$ & $\mathbf{34\%}$ & $\mathbf{16}\times$ & $50.80\%$ \\
    & \method (Ours) & P+Q & $6$ & $50\%$ & ${11\times}$ & $2.46\%$\\
    \bottomrule
    \multicolumn{7}{c}{\textbf{(b)} Compressing 32Bits Qwen2.5-0.5B model on SST-2 dataset with Top-1 Accuracy $92.60\%$.}
    \end{tabular}
\end{table*}

   % \multirow{6}{*}{\rotatebox[origin=c]{90}{LlamaA3.2}} & \multicolumn{2}{c}{Base model}  & Bits & Non-Zero (\%) & Compress. Rate & Top-1 Acc. Drop\\
   % \cline{2-7}
    %  & \multicolumn{2}{c}{Llama3.2-1B} &  $32$ & $100\%$ & 1$\times$ & $94.72\%$ \\
    % \cline{2-7}

    % & \multicolumn{2}{c}{Base model}  & Bits & Non-Zero (\%) & Compress. Rate & Top-1 Acc. \\
    % \cline{2-7}
    % &\multicolumn{2}{c}{Qwen2.5-0.5B} & $32$ & $100\%$ & $1\times$ & $92.60\%$ \\
   % \cline{2-7}

\subsection{Experimental analysis}
% \nj{TO BE REMOVED: Compared to existing baselines, our \method method could achieve less accuracy drop as we adopted a trainable quantization function that adapts to the data better. Furthermore, our unified pruning and quantization method achieves significantly higher compression rates with comparable or smaller performance degradation. Optimizing pruning and quantization separately overlooks redundancies in each step; by merging them into a single optimization, we effectively eliminate these inefficiencies}.

\textbf{Compression on ResNet models.} Table \ref{tab:cifar10} summarizes the result of all methods for compressing ReNet-18, ReNet-32 and ReNet-56, evaluated on the CIFAR-10 dataset.
On compressing the  ResNet-18 model, our \method attains a better compression rate than the baselines.
On compressing ResNet-32 and ResNet-56 models, our \method attains substantially higher compression rates while incurring smaller accuracy drops compared to the baselines.
Optimizing pruning and quantization separately overlooks redundancies in each step; by merging them into a single optimization, we effectively eliminate these inefficiencies

\textbf{Compression on BERT-base model.}
We apply our compression method to the BERT-base model~\citep{devlin2018bert} and evaluate its performance on the SQuAD v1.1 dataset~\citep{rajpurkar2016squad}. 
The evaluation metrics include the F1 score drop and the compression rate. 
As shown in Table~\ref{tab:bert_squad}, our method achieves the lowest F1 score drop and the highest compression rate, outperforming existing methods. This
 demonstrates the effectiveness of our \method method in preserving accuracy under aggressive compression.

\textbf{Compression on Llama and Qwen models.} In Table~\ref{tab:sst}, we compare our method \method with others on the SST2 task in the GLUE benchmark using Llama3.2-1B and Qwen2.5-0.5B models, as our method could preserve the weights outliers, which are crucial in maintaining the performance~\citep{lin2024awqactivationawareweightquantization}. 
We further observe a big performance drop of the DGMS~\citep{dong2022finding} method for compressing Llama3.2-1B and Qwen2.5-0.5B, which occurs because the attention weights distribution is not Gaussian (see Figure~\ref{fig:windowing}(left)), and it fails to capture large magnitude weights. Furthermore, DGMS doesn't allow customization of the sparsity level, thus it presents an unreasonable performance drop. 

\begin{table*}[!t]
    \centering
     \caption{Impact of the Gaussian prior and the spike-and-slab prior, for compressing a 32 bits ResNet-18 model on the CIFAR-100 dataset with Top-1 Accuracy $79.26\%$. The spike-and-slab prior used in our \method consistently yields better performance than the Gaussian prior across all sparsity-level settings.}
    \label{tab:prior_compare}
    % \vspace{-0.5em}
    \begin{tabular}{lccr|rr}
    % \hline
    \toprule
   \multirow{6}{*}{\rotatebox[origin=c]{90}{ResNet-18}} &\multirow{2}{*}{Bits}  & \multirow{2}{*}{Non-zero (\%) } & \multirow{2}{*}{Compression rate} & \multicolumn{2}{c}{Top-1 Accuracy drop}\\
       &&& &Gaussian prior  & Spike-and-slab prior (Ours) \\
    % \cline{2-6}  
    \cmidrule(lr){2-6}
    & $4$ & $50\%$ & $16\times$ & $4.51\%$ & $\mathbf{3.12}\%$ \\
    & $4$ & $40\%$ & $20\times$ & $5.60\%$ & $\mathbf{3.21}\%$ \\
    & $4$ & $30\%$ & $27\times$ & $11.42\%$ & $\mathbf{5.54}\%$ \\
    & $4$ & $20\%$ & $40\times$ & $44.04\%$  & $\mathbf{5.59}\%$ \\
    % \hline
    \bottomrule
    \end{tabular}
\end{table*}

\begin{figure*}[!t]
\centering
\includegraphics[width=\linewidth]{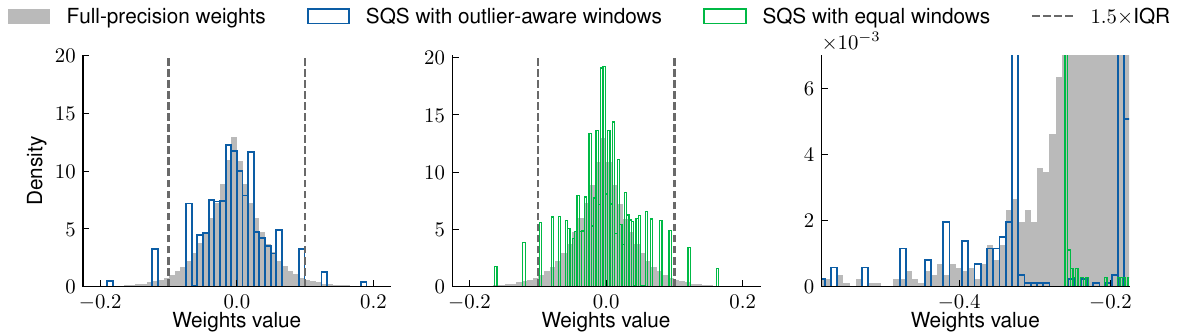} 
  \caption{Impact of different window strategies. For the compressed weight distributions in the first attention layer of the Llama3.2-1B model, \method with outlier-aware window strategy (left) yields better preservation of the full-precision weight distribution compared to the equal-sized window (middle). This advantage is also evident in the left tail region shown in (right). Weight visualization of layers of Llama3.2-1B and Qwen2.5 is provided in the Appendix~\ref{apx:extended_result}.}
    \label{fig:windowing}
    \vspace{-1em}
\end{figure*}

\subsection{Ablation studies for \method method}\label{sec:ablation}

\textbf{Impact of different priors.} 
We evaluate the impact of the choices of prior—specifically, the Gaussian prior vs. the spike-and-slab prior—on model performance. Please see Appendix \ref{apx:gaussian_prior} for detailed setup on the Gaussian prior. The task involves compressing a ResNet-18 model at varying sparsity levels by representing each layer’s weights using $K=16$ components and evaluating performance on the CIFAR-100 dataset.
In Table~\ref{tab:prior_compare}, the spike-and-slab prior consistently outperforms the Gaussian prior across all sparsity levels. Notably, the Gaussian prior leads to significantly degraded performance under high sparsity, suggesting its limitation in inducing effective posterior sparsity in DNN weights. %We leave a deeper investigation of this phenomenon to future work.

\textbf{Impact of different window strategies.}
We use the first attention layer weights in the Llama3.2-1B model as the case study. The full-precision weights exhibit a long-tail distribution, where a fraction of weights have large magnitudes. Based on this observation, we further introduce an outlier-aware window strategy to fit the full-precision weights distribution.
As shown in Figure~\ref{fig:windowing} (left) and (middle), the quantized weight generated from \method with the outlier-aware window strategy matches the distribution better than using the equal window strategy.
In Figure~\ref{fig:windowing} (right), 
The \method with an outlier-aware window strategy matches the distribution at the tails better than using the equal window strategy. More results on the rest of the layers' weight distribution are provided in the Appendix~\ref{apx:extended_result}.

\begin{figure}[!t]
\centering
    \includegraphics[width=.7\linewidth]{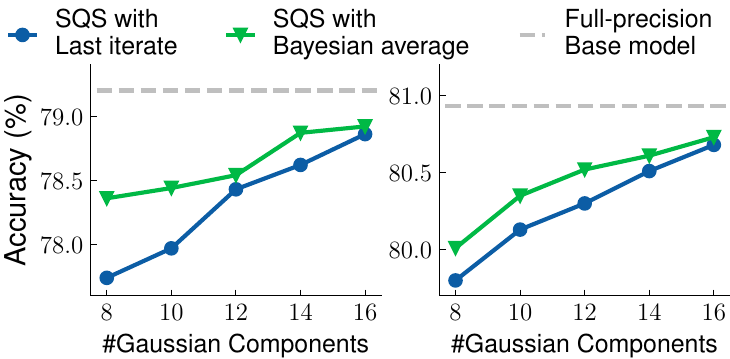} 
  \caption{Comparison of inference accuracy on the CIFAR-100 dataset using ResNet-18 (left) and ResNet-50 (right). 
    Under the same number of Gaussian components, \method with Bayesian averaging 
    (in Equation~\ref{eq:infer-bam}) results in a smaller accuracy drop compared to using a greedy approach (in Equation~\ref{eq:greedy}).}
    \label{fig:quantization_compare}
\end{figure}

\textbf{Impact of different inference strategies.} 
We evaluate the effectiveness of two inference strategies—Bayesian averaging and greedy approach, as defined in Equation~\eqref{eq:infer-bam}—within our \method framework. To ensure a fair comparison, we assess the performance of compressing ResNet-18 and ResNet-50 models while varying the number of Gaussian components. The sparsity level is fixed to zero (i.e., no pruning), so that all performance degradation arises purely from quantization.
As shown in Figure~\ref{fig:quantization_compare}, using fewer components results in a larger accuracy drop. Under the same number of components, \method with Bayesian averaging consistently achieves a smaller accuracy drop compared to the greedy approach strategy.

\section{Conclusion}
In this paper, we proposed a unified framework for compressing full-precision DNNs by combining pruning and quantization into one integrated optimization process through variational learning. Unlike conventional approaches that apply pruning and quantization sequentially—often resulting in suboptimal solutions—our method jointly explores a broader solution space, achieving significantly higher compression rates with comparable performance degradation.
To address the intractability of the original objective, we introduce an efficient approximation that enables scalable optimization. We evaluate our method across a range of benchmarks, including ResNets, BERT-base, Llama3, and Qwen2.5. 

Experimental results demonstrate that our approach consistently outperforms existing baselines in compression rate while maintaining competitive accuracy, highlighting its potential for efficient deployment in resource-constrained environments.

% For future work, we also plan to test our method on other computationally demanding models, such as Vision Transformers (ViTs), Mamba, or Discrete Diffusion architectures.

\bibliography{reference}

\newpage
\appendix
\setcounter{tocdepth}{2}
\tableofcontents
\allowdisplaybreaks
\newpage

\section{Derivation of Approximate Objective} \label{apx:obj-approx-theory}

\label{Appex:approximation}

\subsection{An upper bound on the KL divergence between two mixtures}

To simplify the ELBO and validate our approach, we reformulate a key lemma from previous work \citep[Lemma 6.1]{cherief2018consistency}. It is a tool widely used in signal processing~\cite{DBLP:conf/icassp/HersheyO07}. We provide the
proof for the sake of completeness.
\begin{lemma}[From Lemma 6.1 in~\cite{cherief2018consistency} ] \label{lem:mix}
For any $K>0$, the KL divergence between any two mixture densities $\sum_{k=1}^Kw_kg_k$ and $\sum_{k=1}^K\tilde{w}_k\tilde{g}_k$ is upper bounded by
\begin{equation*}
\KL\left(\sum_{k=1}^Kw_kg_k \Big\| \sum_{k=1}^K\tilde{w}_k\tilde{g}_k\right) \leq  \sum^K_{k=1}w_k\KL(g_k\|\tilde{g}_k)+\sum_{k=1}^K w_k \log\left(\frac{w_k}{\tilde{w}_k}\right) 
\end{equation*}
\begin{proof} 
We expand the KL divergence term by its definition, thus it could have the following:
\begin{equation*}
\begin{aligned}
    \KL\left(\sum_{k=1}^Kw_kg_k \Big\| \sum_{k=1}^K\tilde{w}_k\tilde{g}_k\right) &=\int\left(\sum_{k=1}^Kw_kg_k \right)\log \left(\frac{\sum_{k=1}^Kw_kg_k }{\sum_{k=1}^K\tilde{w}_k\tilde{g}_k}\right)\\
    &\le \int \sum_{k=1}^Kw_kg_k\log \left(\frac{w_kg_k }{\tilde{w}_k\tilde{g}_k}\right)  \\
    &= \int \sum_{k=1}^Kw_kg_k\log \left(\frac{g_k }{\tilde{g}_k}\right) +\int \sum_{k=1}^Kw_kg_k\log \left(\frac{w_k }{\tilde{w}_k}\right)\\
    &= \sum_{k=1}^Kw_k\int g_k\log \left(\frac{g_k }{\tilde{g}_k}\right) +\sum_{k=1}^Kw_k\log \left(\frac{w_k }{\tilde{w}_k}\right)\left(\int g_k\right)\\
    &=\sum^K_{k=1}w_k\KL(g_k\|\tilde{g}_k)+\sum_{k=1}^K w_k \log\left(\frac{w_k}{\tilde{w}_k}\right),
\end{aligned}
\end{equation*}
where the first inequality is due to the Jensen inequality and the convexity of the function $x\log(x)$. This completes the proof.
\end{proof}
\end{lemma}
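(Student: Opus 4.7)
The plan is to reduce the mixture KL bound to a pointwise inequality and then integrate. Starting from the definition of the KL divergence between the two mixture densities, the numerator and denominator inside the logarithm are both sums over the same index $k$, so the natural tool is the log-sum inequality (equivalently, Jensen's inequality applied to the convex function $t \mapsto t \log t$).

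First I would apply the log-sum inequality pointwise at each $x$: with $a_k = w_k g_k(x)$ and $b_k = \tilde w_k \tilde g_k(x)$, this gives
\[
\left(\sum_{k=1}^K a_k\right) \log \frac{\sum_k a_k}{\sum_k b_k} \;\leq\; \sum_{k=1}^K a_k \log \frac{a_k}{b_k}.
\]
This replaces the ratio of sums inside the logarithm by a sum of term-wise ratios. Next I would split $\log\bigl(w_k g_k(x) / (\tilde w_k \tilde g_k(x))\bigr) = \log(w_k/\tilde w_k) + \log(g_k(x)/\tilde g_k(x))$ so that the mixing-weight contribution separates from the component density contribution.

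The final step is to interchange the sum and the integral and use that each $g_k$ is a probability density. The mixing-weight piece collapses to $\sum_k w_k \log(w_k/\tilde w_k)$ because $\int g_k(x)\,dx = 1$, and the component piece yields $\sum_k w_k \KL(g_k \| \tilde g_k)$, which is exactly the claimed upper bound.

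Since the log-sum inequality is the sole substantive ingredient and is classical, I do not anticipate a genuine obstacle. The only subtle point worth highlighting is that the argument depends on a canonical pairing $k \leftrightarrow k$ between the two mixtures; this is built into the hypothesis of a common index set of size $K$. No further regularity or support assumptions on the $g_k, \tilde g_k$ are needed beyond the usual conventions for the KL divergence, and equality in the bound would hold precisely when the ratio $w_k g_k(x)/(\tilde w_k \tilde g_k(x))$ is constant in $k$ for $P$-almost every $x$.
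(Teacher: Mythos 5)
Your proposal is correct and follows essentially the same route as the paper: both apply the log-sum inequality (Jensen for $t\mapsto t\log t$) pointwise to the ratio of mixture sums, split the logarithm into weight and component terms, and integrate using $\int g_k = 1$. The observation about the canonical index pairing and the equality condition is a nice addition but the core argument is identical.
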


\subsection{Derivation of Approximate Objective}
We aim to approximate the ELBO objective:
\begin{equation}
     \Omega (\tilde{\theta})= -\underbrace{\E_{q(\tilde{\theta})}[\log p(D; \tilde{\theta})] }_{\mathtt{Part\; 1}}+\sum_{i=1}^{T}\KL\left(q(\tilde{\theta}_i) \| \pi(\tilde{\theta}_i)\right). 
\end{equation}
where $\pi(\tilde{\theta}_i)$ is defined in Equation~\eqref{eq:marginal_prior} and $q(\tilde{\theta}_i)$  is defined in Equation~\eqref{eq:marginal_vb}:
\begin{align*}
q(\tilde{\theta}_i) &= \tilde{\lambda}_i \sum_{k=1}^{K} \phi_k(\theta_i) \mathcal{N}(\mu_k, \sigma_k^2) + (1 - \tilde{\lambda}_i)\delta_0,\\
\pi(\tilde{\theta}_i) &= \lambda \mathcal{N}(0, \sigma_0^2) + (1 - \lambda)\delta_0.
\end{align*}
It is important to note that the KL divergence between the variational distribution and the spike-and-slab prior distribution does not have a closed-form solution.

\paragraph{Step 1: Approximate the expected log-likelihood.} The first term $\E_{q(\tilde{\theta})}[\log p(D;\tilde{\theta})]$ can be expensive to compute, due to the sampling over spike-and-slab distribution. A tractable approximation is to replace the expectation with the log-likelihood at the mean parameter: $\E_{q(\tilde{\theta})}[\log p(D;\tilde{\theta})]\approx \log p(D;\E_{q(\tilde{\theta})}[\tilde{\theta}])$. Since,
\begin{equation*}
    \E_{q(\tilde{\theta})}[\tilde{\theta}] = \tilde{\lambda}_i\sum_{k=1}^{K}\mu_k \phi_k(\theta_i)+ (1-\tilde{\lambda}_i)* 0 =\tilde{\lambda}_i\sum_{k=1}^{K}\mu_k \phi_k(\theta_i),
\end{equation*}
We then have:
\begin{equation*}
   \mathtt{Part\; 1}\approx  \log{p\left(D;\tilde{\lambda}_i\sum_{k=1}^{K}\mu_k \phi_k(\theta_i)\right)}.
 \end{equation*}
% This approximation works well when  $\tau_1$ is small.

\paragraph{Step 2: Upper Bound KL between spike-and-slab distributions.} The KL divergence between the marginal variational posterior and the prior is intractable due to the presence of both the Dirac delta and the mixture components.
To upper-bound the KL divergence between them, we apply Lemma~\ref{lem:mix} by matching component structure:
\begin{align*}
\KL(q \| \pi)& \leq \sum_{j=1}^2 w_j \KL(g_j \| \tilde{g}_j) + \sum_{j=1}^2 w_j \log\left(\frac{w_j}{\tilde{w}_j}\right), \\
\text{with } \qquad & w_1 = \tilde{\lambda}_i, g_1 = \sum_{k=1}^{K} \phi_k(\theta_i) \mathcal{N}(\mu_k, \sigma_k^2), \tilde{w}_1 = \lambda, \tilde{g}_1 = \mathcal{N}(0, \sigma_0^2) \\
& w_2 = 1 - \tilde{\lambda}_i, g_2 = \delta_0, \tilde{w}_2 = 1 - \lambda, \tilde{g}_2 = \delta_0
\end{align*}
Substituting into the bound, we obtain:
\begin{align*}
\KL(q(\tilde{\theta}_i) \| \pi(\tilde{\theta}_i)) 
\leq &\tilde{\lambda}_i \KL\left(\sum_{k=1}^{K} \phi_k(\theta_i) \mathcal{N}(\mu_k, \sigma_k^2)  \Big\| \mathcal{N}(0, \sigma_0^2)\right) + (1 - \tilde{\lambda}_i) \underbrace{\KL(\delta_0 \| \delta_0)}_{= 0}\\
&+ \tilde{\lambda}_i \log\frac{\tilde{\lambda}_i}{\lambda} + (1 - \tilde{\lambda}_i)\log\frac{1 - \tilde{\lambda}_i}{1 - \lambda}.
\end{align*}
Combining the terms, we have:
\begin{align*}
\KL(q(\tilde{\theta}_i) \| \pi(\tilde{\theta}_i)) 
\leq \tilde{\lambda}_i  \KL\left(\sum_{k=1}^{K}\phi_k(\theta_i)\mathcal{N}(\mu_k, \sigma^2_k)  \Big\| \mathcal{N}(0, \sigma_0^2)\right)+\KL(\mathrm{Bern}(\tilde{\lambda}_i) \| \mathrm{Bern}(\lambda)),
\end{align*}
Note that the first term on the right-hand side, which is the KL divergence between the GMM and the Gaussian distribution, does not have a closed form. But it can be further upper-bounded as:
\begin{align*}
    \KL&\left(\sum_{k=1}^{K}\phi_k(\theta_i)\mathcal{N}(\mu_k, \sigma^2_k)\Big\| \mathcal{N}(0, \sigma^2_0)\right) = \KL\left(\sum_{k=1}^{K}\phi_k(\theta_i)\mathcal{N}(\mu_k, \sigma^2_k)\Big\|\sum_{k=1}^K \phi_k(\theta_i)\mathcal{N}(0, \sigma^2_0)\right) \\
    \leq& \sum_{k=1}^{K}\phi_k(\theta_i)\KL\left(\mathcal{N}(\mu_k,\sigma^2_k) \| \mathcal{N}(0, \sigma^2_0)\right) + \sum_{k=1}^{K} \phi_k(\theta_i)\underbrace{\log\left(\frac{\phi_k(\theta_i)}{\phi_k(\theta_i)}\right)}_{=0} \\
    =& \underbrace{\sum_{k=1}^{K}\phi_k(\theta_i)\KL\left(\mathcal{N}(\mu_k,\sigma^2_k) \| \mathcal{N}(0, \sigma^2_0)\right)}_{\mathtt{Part~2}},
\end{align*}
where the inequality is obtained by Lemma \ref{lem:mix}. 
Empirically, we approximate the mixture KL by evaluating only the dominant component:
\begin{equation*}
\mathtt{Part\; 2}\approx \KL(\mathcal{N}(\mu_{k^*},\sigma^2_{k^*}) \| \mathcal{N}(0, \sigma^2_0)),\qquad\text{where } k^* = \argmax_k \phi_k(\theta_i)
\end{equation*}
where we approximate the inner sum over $k$ using the maximum-weight component, which is the $k^*$-th component.

As a small temperature is needed to avoid a flat posterior distribution, which could introduce large differences between the training phase and inference phase.

Finally, putting all approximations together, we obtain:
\begin{align*}
    \Omega_{\mathtt{apx}}(\tilde{\theta}) =& -\log{p\left(D;\tilde{\lambda}_i\sum_{k=1}^{K}\mu_k \phi_k(\theta_i;\pi, \tau)\right)} \\
    &+\sum_{i=1}^T \KL(\mathrm{Bern}(\tilde{\lambda}_i)  \|  \mathrm{Bern}(\lambda))  \\
&+\sum_{i=1}^{T}\tilde{\lambda}_i\KL(\mathcal{N}(\mu_{k^*},\sigma^2_{k^*}) \| \mathcal{N}(0, \sigma^2_0)),\qquad\text{where }  k^* = \argmax_{1\le k\le K} \phi_k(\theta_i).  \label{eq:elbo_upperbound_1}
\end{align*}
which is the result shown in Equation~\eqref{eq:approx_elbo}.

\newpage

\newpage
\section{Proof of Theorem \ref{thm:convergence}} \label{apx:converge-obj}

Consider a $L$-hidden layer fully connected neural network with the ReLU activation function $\sigma_b: \mathbb{R}^d\rightarrow \mathbb{R}^d$ defined as $\sigma_b(X) = \max\{0, X-b\}$ on some dimension $d$ and parameter $b$. The number of neurons in each layer is defined as $p_i$ for $i=1, \dots, L$. The weights and biases are denoted as the $W_i \in \mathbb{R}^{N \times N}$ and $b_i \in \mathbb{R}^{N}$. Thus given the parameters $\mathbf{p} = (p_1, \cdots, p_L)$ and let $\theta$ denote the vector obtained by stacking all entries of the weight matrices $W_i$ and bias vectors $b_i$, then the fully connected network can be presented as:
\begin{equation*}
    f_{\theta}(X)=W_{L+1}\sigma_{b_{L}}(W_{L}\sigma_{b_{L-1}}\ldots \sigma_{b_1}(W_1X)) + b_{L+1}.
\end{equation*} 
The DNN $f_{\theta}$ also introduces a probability measure of the data, which we denote as $P_{\theta}$, and $p_{\theta}$ is the corresponding density function, $p_{\theta}(D)$ would be the likelihood of the data $D$. 

One can define the sparse parameter space with sparsity parameter $s$ as $\{\theta\in \R: \norm{\theta}_0 < s\}$, where $\theta$ has only $s$ many non-zero entries. 
Then we can further introduce the sparse and quantized weights space $H(T, s, K)$ as follows:
\begin{align*}
    \mathcal{I}(T, s, K) &= \{I=[r_1,\dots, r_T]^\top | \norm{I}_0 \leq s, r_i\in\{0,1\}^K, \norm{r_i}_0\leq1\}, \\
    H(T,s,K) &=\left\{\theta \in \R^{T} | \theta=I\cdot E, E\in [-B, B]^{K},I\in\mathcal{I}(T, s, K)\right\},
\end{align*}
where $B$ is some constant satisfies $B > 2$ and $\mathcal{I}(T, s, K)$ is the indexing space, each element $I\in\mathcal{I}(T, s, K)$ consists of $s$ many $K$-dimension one-hot rows, and the rest $T-s$ rows are zero vectors indicating the corresponding weight is pruned. In such a way, any $\theta \in H(T, s, K)$ satisfies that $\norm{\theta}_0 \leq s$ and $\theta$ only have $K$ many distinct entry values then the DNN $f_{\theta}(\cdot)=f(\cdot; \theta)$ is sparse and quantized.  %In Theorem \ref{thm:convergence}, we show that our variational learning procedure with high probability could find a closed enough, sparse, and quantized DNN from the sparse and quantized weight space $H(T, s, K)$to the true function $f_0(\cdot)$. 
The following conditions are assumed, similarly to \cite{bai2020efficient}:
\begin{cond} \label{cond:constant_width}
$p_i\equiv N\in\mathbb Z^{+}$ that can depend on n, and $\lim T=\infty$. 
\end{cond}
\begin{cond}\label{cond:activation}
$\sigma(x)$ is 1-Lipschitz continuous. 
\end{cond}
\begin{cond}\label{cond:sigma}
The hyperparameter $\sigma_0^2$ is set to be some constant, and $\lambda$ satisfies
\begin{align*}
\log\left(\frac{1}{\lambda}\right) &= O\left((L+1)\log N + \log(p\sqrt{n/s^*})\right)\\
\log\left(\frac{1}{1-\lambda}\right) &= O\left(\frac{s^*}{T}\left((L+1)\log N + \log(p\sqrt{n/s^*})\right)\right)
\end{align*}
\end{cond}

\begin{cond}\label{cond:order}
     $\max\{s^*\log(p\sqrt{n/s^*}, (L+1)s^*\log N\} = o(n)$ and $r_n^*\asymp \xi_n^*$.
\end{cond}
where the ``oracle'' sparsity $s^*$ is defined in Equation~\eqref{eq:def_sstar}.
\begin{defi}
The true function $f_0$ defined in Equation \ref{eq:regression} is $\beta$-H\"older continuous if
\begin{align}
    \nonumber f_0 \in \mathcal{C}^{\beta}_d(M) = \Big\{&f:[0,1]^{p}\rightarrow \R : \sum_{\alpha: |\alpha|<\beta}\|\partial^{\alpha} f\|_{\infty}+ \sum_{\alpha: \alpha=\lfloor\beta\rfloor }\sup_{\stackrel{x, y \in [0, 1]^{p}}{x \neq y}}
    \frac{|\partial^{\alpha} f(x) - \partial^{\alpha} f(y)|}{|x-y|_\infty^{\beta-\lfloor \beta \rfloor}} \leq M
    \Big\}.
\end{align}
for some constant $M>0$.
\end{defi}

% \begin{align*}
% 	\mC_r^\beta(D, K) = \Big\{ 
% 	&f:D \subset \mathbb{R}^r \rightarrow \mathbb{R} : \\
% 	&\sum_{\balpha : |\balpha| < \beta}\|\partial^{\balpha} f\|_\infty + \sum_{\balpha : |\balpha |= \lfloor \beta \rfloor } \, \sup_{\stackrel{\bx, \by \in D}{\bx \neq \by}}
% 	\frac{|\partial^{\balpha} f(\bx) - \partial^{\balpha} f(\by)|}{|\bx-\by|_\infty^{\beta-\lfloor \beta \rfloor}} \leq K
% 	\Big\},
% \end{align*}

Following previous paper~\citep{bai2020efficient}, we define:
\begin{align}\label{eq:def_sstar}
s^*=\arg\min_s\{r_n(L,p,s)+\xi_n(L,p,s)\}
\end{align}
where
\begin{align*}
r_n(L,p,s)&= ((L+1)s/n)\log N + (s/n)\log (p\sqrt{n/s})\\
\xi_n(L,p,s) &= \inf_{\theta\in H(T,s, K), \|\theta\|_{\infty}\le B }\|f_\theta-f_0\|^2_\infty.
\end{align*}
Correspondingly, we define $
r_n^*= r_n(L,p,s^*),
 \xi_n^* = \xi_n(L,p,s^*)$.

In this section, we reformulate the variational distribution by introducing a latent index variable. For any $q(\tilde{\theta}) \in \mathcal{F}$, it has the following equivalent form:
\begin{align}\label{eq:nmodel}
    \tilde{\theta}_i|z_i, \gamma_i &\sim \gamma_i\sum_{k=1}^{K}\mathbbm{1}\{z_i=k\}\mathcal{N}(\mu_k, \sigma_k^2)+(1-\gamma_i)\delta_0, \\
    z_i &\sim \text{Categorical}(\phi_1(\theta_i), \phi_2(\theta_i), \dots, \phi_K(\theta_i)), \\
    \gamma_i &\sim \text{Bernoulli}(\lambda_i),
\end{align}
In addition, for theoretical convenience, we further restrict the variational family to satisfy 
\begin{cond}\label{cond:bound} 
$|\mu_k| \leq B$ and $\sigma_k^2 \leq \frac{1}{2\log(T/r_n^*\log^2(n))}$.
\end{cond}
Note that the requirement of $|\mu_k| \leq B$ is fairly reasonable, as most of the existing approximation results \citep{cherief2020convergence} only need bounded DNN weights. 

We restate a formal version of our Theorem \ref{thm:convergence} as follows:

\begin{theorem}\label{thm:formal_convergence}
Under Conditions \ref{cond:constant_width}-\ref{cond:activation} and \ref{cond:order}-\ref{cond:bound}, 
Let $\sigma_0^2$ be a constant and $-\log\lambda =\log(T)+\delta[(L+1)\log N+\log\sqrt np]$ for any constant $\delta>0$, 
Then with high probability:
    \begin{equation}
        \int_{\R^{T}} d^2(P_{\theta}, P_0) \widehat{q}(\theta)\mathit{d}\theta \leq C \varepsilon_n^{*2} + C'(r_n^*+\xi_n^*),
    \end{equation}
where $d(\cdot,\cdot)$ denotes the Hellinger distance, and $C$ and $C'$ are some constants. 
\end{theorem}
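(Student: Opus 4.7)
The plan is to follow the two-step variational Bayes strategy of \cite{bai2020efficient} for sparse DNNs, adapted to accommodate the quantization structure of our spike-and-GMM family. First I would bound the expected negative ELBO at the variational optimum $\widehat q$ from above by exhibiting a concrete test distribution $q^*\in\mathcal F$ for which $\Omega(q^*)$ is small; then I would convert this ELBO upper bound into Hellinger concentration via a test-function / sieve argument, using the polynomial covering of $H(T,s^*,K)\cap\{\|\theta\|_\infty\le B\}$. These are exactly the roles of Lemma~\ref{lem:bound_elbo} and Lemma~\ref{lem:lmbound1} in the proof sketch.

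For the construction of $q^*$, I would pick $\theta^*\in H(T,s^*,K)$ with $\|\theta^*\|_\infty\le B$ achieving the approximation infimum, so that $\|f_{\theta^*}-f_0\|_\infty^2\le \xi_n^*+o(1)$. Then I would build $q^*$ as a spike-and-GMM distribution concentrated on $\theta^*$: for every coordinate $i$ in the support of $\theta^*$, set $\tilde\lambda_i^*$ close to $1$, align one mixture mean $\mu_{k^\star(i)}$ with the corresponding quantized level and choose softmax logits so that $\phi_{k^\star(i)}(\theta_i^*)\approx 1$, while for coordinates outside the support set $\tilde\lambda_i^*$ close to $0$. I would take the Gaussian variances $\sigma_k^2$ at the upper limit allowed by Condition~\ref{cond:bound}, namely $\sigma_k^2\asymp 1/\log(T/r_n^*\log^2 n)$. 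The KL bound in Equation~\eqref{eq:approx_elbo}, together with Lemma~\ref{lem:mix}, then decomposes into a Bernoulli-KL contribution of order $s^*\log(1/\lambda)$ from retained coordinates (plus a negligible $T\log(1/(1-\lambda))$ piece from pruned ones) and a per-component Gaussian-KL contribution of order $B^2/\sigma_0^2$; under Condition~\ref{cond:sigma} this totals $O(nr_n^*)$.

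Next I would control $-\E_{q^*}\log p(D;\tilde\theta)$. Because $q^*$ concentrates sharply around $\theta^*$ and the network map $f_\theta$ is Lipschitz in $\theta$ by Condition~\ref{cond:activation} with a constant at most polynomial in $N^{L+1}$, a second-order expansion of the Gaussian log-likelihood around $\theta^*$ shows the expected log-likelihood differs from $\log p(D;\theta^*)$ by at most $n\cdot O(\xi_n^*)$ plus a variance term of order $n\sum_i\tilde\lambda_i^*\sigma_{k^\star(i)}^2$, which by the choice of $\sigma_k^2$ is $O(nr_n^*)$. Combining with $\E\log(p_0/p_{\theta^*})\le n\xi_n^*/(2\sigma_\epsilon^2)$ and a Chebyshev / Bernstein step for the empirical log-likelihood yields, with probability tending to one, $\Omega(q^*)\le \Omega(p_0)+Cn(r_n^*+\xi_n^*)$; since $\Omega(\widehat q)\le \Omega(q^*)$, the same bound transfers to $\widehat q$, giving Lemma~\ref{lem:bound_elbo}.

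Finally, to pass from an ELBO bound at $\widehat q$ to an integrated Hellinger bound, I would invoke the variational version of the Ghosal--van der Vaart testing framework: construct a sieve $\mathcal S_n\subset H(T,s^*,K)$ whose $\varepsilon_n^*$-covering number is at most exponential in $n\varepsilon_n^{*2}$ (quantization plus sparsity make this entropy polynomial in $T,N,K$), build exponential tests separating $P_0$ from $\{P_\theta : d(P_\theta,P_0)>\varepsilon_n^*\}$ on $\mathcal S_n$, and use the prior mass condition implied by Condition~\ref{cond:sigma}. This produces Lemma~\ref{lem:lmbound1}, which combined with the ELBO bound yields $\int d^2(P_\theta,P_0)\widehat q(\theta)\,\mathit d\theta \le C\varepsilon_n^{*2}+C'(r_n^*+\xi_n^*)$. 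The main obstacle I anticipate is Step~1: the softmax parameterization of $\phi_k$ coupled with the hard variance cap in Condition~\ref{cond:bound} must simultaneously yield a $q^*$ that concentrates tightly enough for the Taylor remainder to be $O(nr_n^*)$ and whose mixture-KL upper bound (via Lemma~\ref{lem:mix}) is tight enough to avoid losing logarithmic factors; verifying that both requirements are jointly satisfiable, rather than in tension, is the delicate part.
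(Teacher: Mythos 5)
Your two-step skeleton (construct a concentrating test distribution $q^*$ to bound the ELBO, then convert to Hellinger concentration via a Donsker--Varadhan/Ghosal--van der Vaart sieve argument) is exactly the paper's strategy, and your Step~2 description is broadly on target; the issue is in Step~1, and it is fatal as stated.

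You propose to take the Gaussian variances of $q^*$ ``at the upper limit allowed by Condition~\ref{cond:bound},'' i.e.\ $\sigma_k^2 \asymp 1/\log(T/r_n^*\log^2 n)$, and then assert that the variance contribution to $-\E_{q^*}\log p(D;\tilde\theta)$ is of order $n\sum_i\tilde\lambda_i^*\sigma_{k^\star(i)}^2$, which you claim is $O(nr_n^*)$. This misses the Lipschitz blow-up you yourself flag: the network map $\theta\mapsto f_\theta$ has Lipschitz constant of order $(BN)^L$, so a per-coordinate perturbation of standard deviation $\sigma$ propagates to a functional deviation $\|f_{\theta}-f_{\theta^*}\|_\infty$ of order $(BN)^L\sigma$, and the Hessian of the Gaussian log-likelihood picks up the square of this factor. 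The variance term in the Taylor expansion is therefore of order $n\,s^*(BN)^{2L}\sigma^2$, not $n\,s^*\sigma^2$. With $\sigma^2\asymp 1/\log(T)$ this does not vanish and certainly is not $O(nr_n^*)$. The paper's proof (Lemma~\ref{lem:bound_elbo}) instead sets
\[
\sigma_n^2 \;=\; \frac{s^*}{32n}\,\log(3N)^{-1}\,(2BN)^{-2L}\Bigl\{(p+1+\tfrac{1}{BN-1})^2+\tfrac{1}{(2BN)^2-1}+\tfrac{2}{(2BN-1)^2}\Bigr\}^{-1},
\]
which is \emph{exponentially} smaller in $L$ than the cap, precisely so that the propagated error $\int\|f_\theta-f_{\theta^*}\|_\infty^2\,q^*(\mathit d\theta)\le r_n^*$. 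The price is a larger Gaussian-KL term of order $s^*\log(\sigma_0^2/\sigma_n^2)\asymp s^*\bigl((L+1)\log N+\log(n/s^*)\bigr)\asymp nr_n^*$, which you also underestimate by writing it as only $B^2/\sigma_0^2$.

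The ``tension'' you worry about at the end is resolved once you notice that Condition~\ref{cond:bound} is a one-sided cap on the variational family, not a prescribed value: its only role is in Lemma~\ref{lem:bound}, controlling the tail mass $\widehat q(H_n'')\le\varepsilon_n^{*2}$ of the \emph{optimized} posterior, so that the three-way split $H_n\cup H_n'\cup H_n''$ in Lemma~\ref{lem:lmbound1} closes. It places no lower bound on the variance of the constructed $q^*$, which the paper chooses much smaller (and verifies in a footnote that it still satisfies the cap). Swapping in the paper's $\sigma_n^2$ fixes Step~1; the rest of your sketch is consistent with the paper's argument.
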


\begin{proof}
We introduce Lemma \ref{lem:bound_elbo} and \ref{lem:lmbound1} to help us finish the proof. 

\begin{lemma}\label{lem:bound_elbo}
    Under Condition \ref{cond:constant_width}-\ref{cond:sigma}, then with dominating probability,
\begin{equation*}
    \inf_{q(\theta)\in \mathcal{F}}\Big\{\KL\left(q(\theta)\|\pi(\theta|\lambda)\right)+\int_{\R^T}l_n(P_0, P_\theta)q(\theta)\mathit{d}\theta\Big\} \leq Cn(r_n^*+\xi_n^*)
\end{equation*}
where $C$ is either some positive constant if $\lim n(r_n^*+\xi^*_n)=\infty$, or any diverging sequence if $\lim\sup n(r_n^*+\xi^*_n)\neq \infty$. And $l_n(P_0, P_\theta)$ is defined as:
\begin{equation*}
    l_n(P_0, P_\theta) = \frac{1}{2\sigma^2_{\epsilon}}(\norm{Y - f_{\theta}(X)}^2_2 - \norm{Y - f_{0}(X)}^2_2).
\end{equation*}
\end{lemma}

\begin{lemma}{\label{lem:lmbound1}}
Under Conditions \ref{cond:constant_width}-\ref{cond:order}, if $\sigma_0^2$ is set to be constant and $\lambda\leq T^{-1}\exp\{-M nr_n^*/s_n\}$ for any positive diverging sequence $M\rightarrow\infty$, then with dominating probability, we have
\begin{equation}\label{eqbound1}
\int_\Theta d^2(P_{\theta}, P_0)\widehat{q}(\theta)\mathit{d}\theta \leq C\varepsilon^{*2}_n + \frac{3}{n}\inf_{q(\theta) \in \mathcal F}\Bigl\{ \KL(q(\theta)\|\pi(\theta|\lambda))
+ \int_{\Theta} l_n(P_0, P_{\theta})q(\theta)\mathit{d}\theta \Bigr\},
\end{equation}
where $C$ is some constant, and $$\varepsilon^*_n :=\varepsilon_n(L,N,s^*)=\sqrt{r_n(L,N,s^*)}\log^\delta(n), \mbox{ for any } \delta > 1.$$
\end{lemma}

Then convergence in squared Hellinger distance follows directly from Lemmas \ref{lem:bound_elbo} and \ref{lem:lmbound1}, as the chosen value of $\lambda$ meets the necessary assumptions. 
\end{proof}

\textbf{Remark}. Compared to prior results, Lemma \ref{lem:bound_elbo} demonstrates that a spike-and-slab prior combined with a Gaussian mixture model (GMM) with finitely many components can effectively approximate the true underlying function. In contrast, Lemma \ref{lem:lmbound1} establishes that the statistical estimation error of the spike-and-GMM variational distribution vanishes as the sample size $n \to \infty$.

\newpage

\subsection{Proof of Lemma \ref{lem:bound_elbo}}
\begin{proof}
Let $\theta^*=\arg\min_{\theta\in H(L, N, s, K)}\|f_\theta-f_0\|^2_{\infty}$. By definition, they are only $K$ many unique non-zero number in $\theta^*$, denoted as $\mu_k^* \in \R$, for $k = 1,\dots, K$. In other words, for any $\theta_i^* \neq 0$,  $\theta_i^*$ must choose from the quantization set $\mathcal{Q} = \{\mu_1^*, \dots, \mu_K^*\}$, and we denote the choice index of $\theta_i^*$ from $\mathcal{Q}$ as $(i)$, (i.e. $\theta_i^*=\mu_{(i)}^*$). Now, given $\theta^*$, we construct $q^*(\tilde{\theta})$ as the following:
\begin{align*}
    \tilde{\theta}_i|z_i^*, \gamma_i^*&\sim\gamma^*_i \sum_{k=1}^K\mathbbm{1}\{z_i^*=k\}\mathcal{N}(\mu_k^*, \sigma_n^2)+(1-\gamma_i^*)\delta_0, \\
    z_i^* &\sim \text{Categorical}(\phi_1(\theta_i^*), \phi_2(\theta_i^*), \dots, \phi_K(\theta_i^*)),\\
\gamma_i^*&\sim\text{Bernoulli}(\psi^*_i), \qquad \psi^*_i=\mathbbm{1}\{\theta_i^*\neq 0\}
\end{align*}
where $\sigma_n^2= \frac{s^*}{32n}\log(3N)^{-1}(2BN)^{-2L}\{(p+1+\frac{1}{BN-1})+\frac{1}{(2BN)^2-1}+\frac{2}{(2BN-1)^2}\}^{-1}$.
\footnote{Notice that $\sigma_n^2$ satisfies Condition \ref{cond:bound} as with sufficient large $n, N$ and $L$, $1/n \leq 1/\log(n)$ and $1/N^L < 1/\log(T) = 1/\log(NL)$.}
Thus, we can have the following marginal distribution:
\begin{equation*}
    q^*(\theta) = \sum\mathbf{1}\{\gamma=\gamma^*\}\prod_{i=1}^T\gamma_i \left[ \phi_{(i)}(\theta_i^*)\mathcal{N}(\mu_{(i)}^*,\sigma_n^2)+\sum_{k\neq (i)}\phi_{k}(\theta_i^*)\mathcal{N}(\mu_{k}^*,\sigma_n^2)\right]+(1-\gamma_i^*)\delta_0
\end{equation*}
Next we first need to the bound $\int \norm{f_\theta-f_{\theta^*}}_{\infty}^2 q^*(\mathit{}\theta)$. We can also write $\theta^* = \{W_1^*, b_1^*, \dots, W_{L+1}^*, b_{L+1}^*\}$, then we define the following terms as:
\begin{align*}
    \tilde{W}_l &= \sup_{i, j} |W_{l, i,j}-W_{l,i, j}^*|,\\
    \tilde{b}_l &= \sup_{i} |b_{l,i}-b_{l,i}^*|.
\end{align*}
Then, following the proof in~\cite[Proof of Theorem 7]{cherief2020convergence}, we can have the following:
\begin{align}
    \nonumber&\int \norm{f_\theta-f_{\theta^*}}_{\infty}^2 q^*(\mathit{d}\theta)  \\
    \nonumber&\leq 2 N^{2L-2}\bigg(d+1+\frac{1}{BN-1}\bigg)^2 \bigg( \sum_{l=1}^L B^{2l-2} \prod_{v=l+1}^L \int (B+\tilde{W}_v)^2 q(\mathit{d} \theta) \int \tilde{W}_l^2 q_l(\mathit{d} \theta_l) \\
    \nonumber&  + 2 \sum_{l=1}^L \sum_{k=1}^{l-1} B^{l-1} B^{k-1} \prod_{v=l+1}^L \int (B+\tilde{W}_v)^2 q(\mathit{d} \theta) \int \tilde{W}_l q_l(\mathit{d} \theta_l) \prod_{v=k+1}^{l} \int (B+\tilde{W}_v) q(\mathit{d} \theta) \int \tilde{W}_k q(\mathit{d} \theta) \bigg) \\
    \nonumber&  + 2 \bigg( \sum_{l=1}^L D^{2(L-l)} \prod_{v=l+1}^L \int (B+\tilde{W}_v)^2 q(\mathit{d} \theta) \int \tilde{b}_l^2 q(\mathit{d} \theta) \\
    &  + 2 \sum_{l=1}^L \sum_{k=1}^{l-1} D^{L-l} D^{L-k} \prod_{v=l+1}^L \int (B+\tilde{W}_v)^2 q(\mathit{d} \theta) \int \tilde{b}_l q(\mathit{d} \theta) \prod_{v=k+1}^{l} \int (B+\tilde{W}_v) q(\mathit{d} \theta) \int \tilde{b}_k q(\mathit{d} \theta) \bigg).\label{eq:20}
\end{align}
Then next we need to upper bound the term:
\begin{equation*}
    \int \tilde{W}_l q^*(\mathit{d}\theta) = \int \sup_{i,j} |W_{l, i, j}-W_{l,i,j}^*| q^*(\mathit{d}\theta)
\end{equation*}
We first bound the following, for some $t > 0$,
\begin{align}
    \nonumber \exp\Big(\E[t\sup_{i, j}|W_{l, i,j}-W_{l,i,j}^*|]\Big) 
    &\leq \E\sup_{i,j}\exp\Big(t|W_{l, i,j}-W_{l,i,j}^*|\Big)\\ 
    \nonumber & \leq N^2\E\Big[\exp\Big(t|W_{l, i,j}-W_{l,i,j}^*|\Big)\Big] \\ 
    \nonumber & = N^2\E\Bigg[ \E_{\mathcal{N}(\mu_k^*, \sigma_n^2)}\Big[\exp(t|W_{l, i,j}-W_{l,i,j}^*|) | z_{l,i,j}  = k\Big] \Bigg] \\ 
    \nonumber & = N^2 \sum_{k=1}^{K}\phi_k(W_{l,i,j}^*)\E_{\mathcal{N}(\mu_k^*,\sigma_n^2)}\Big[\exp\Big(t|W_{l, i, j}-W_{l, i, j}^*|\Big)\Big] \\ 
    \nonumber &= N^2 \Bigg( \phi_{(i)}(W_{l,i,j}^*)\E_{\mathcal{N}(\mu_{(i)}^*, \sigma_n^2)}\Big[\exp\Big(t|W_{l,i,j}-{W}_{l,i,j}^*|\Big)\Big] \\
    &\qquad \qquad +\sum_{k\neq(i)}^{K}\phi_{k}(W_{l,i,j}^*)\E\Big[\exp\Big(t|W_{l,i,j}-W_{l,i,j}^*|\Big)\Big]\Bigg) \label{eq:21}
\end{align}
Notice that by definition $W_{l,i,j}^* = \mu_{(i)}^*$, thus we can bound the first term as:
\begin{align*}
    \E_{\mathcal{N}(\mu_{(i)}^*, \sigma_n^2)}\Big[\exp\Big(t|W_{l,i,j}-{W}_{l,i,j}^*|\Big)\Big] &= \E_{\mathcal{N}(\mu_{(i)}^*, \sigma_n^2)}\Big[\exp\Big(t|W_{l,i,j}-\mu_{(i)}^*|\Big)\Big] \\
    &=\int_0^\infty P(\exp(t|W_{l,i,j}-\mu_{(i)}^*|>x))\mathit{d}x \\
    &=\int_0^\infty P\left(|W_{l,i,j}-\mu_i^*|>\frac{\log x}{t}\right)\mathit{d}x\\
    &=\int_0^\infty 2 P\left(W_{l,i,j}-\mu_{(i)}^*>\frac{\log x}{t}\right)\mathit{d}x\\
    &=\int_0^\infty 2 P\left(z>\frac{\log x}{t \sigma_n}\right)\mathit{d}x\\
    &=2\exp\left(\frac{t^2\sigma^2_n}{2}\right)
\end{align*}
Next, we bound the second term of the Equation~\eqref{eq:21}:
\begin{align*}
    \mathbb{E}_{\mathcal{N}(\mu_{k}^*, \sigma_n^2)}\Big[\exp(t|W_{l,i,j}- \mu_{(i)}^*|)\Big] &=\int_{0}^{\infty} P\Big(\exp(t|W_{l,i,j}-\mu_{(i)}^*|> x)\Big)\mathit{d}x \\
    &=\int_{0}^{\infty} 2P\Big(W_{l,i,j}-\mu_{(i)}^*> \frac{\log x}{t}\Big)\mathit{d}x \\
    &=\int_{0}^{\infty}2P\Big(W_{l,i,j}-\mu_k^*+\mu_k^*-\mu_{(i)}^* > \frac{\log x}{t}\Big)\mathit{d}x\\
    &=\int_{0}^{\infty}2P\Big(\sigma_n z+\mu_k^*-\mu_{(i)}^* > \frac{\log x}{t}\Big)\mathit{d}x\\
    &=2\exp\left(t(\mu_{k}^*-\mu_{(i)}^*)+\frac{\sigma_n^2t^2}{2}\right) \\
    &\leq 2\exp\Big(4t+\frac{\sigma_n^2t^2}{2}\Big)
\end{align*}
Notice that the last inequality is because of $\sup_{i,j}(\mu_i^*-\mu_j^*) \leq 4$.
And by choosing $t=\sqrt{2\log(3N^2)}/\sigma_n$, the Equation~\eqref{eq:21}, can be bounded by:
\begin{align*}
    \exp\left(\E\left[t\sup_{i, j}|W_{l, i,j}-W_{l,i,j}^*|\right]\right) &\leq 2N^2\Bigg(\phi_{(i)}(W_{l,i,j}^*)\exp(t^2\sigma_n^2/2)+\sum_{k\neq(i)}\phi_k(W_{l,i,j}^*)\exp(4t+t^2\sigma_n^2/2)\Bigg) \\
    &=2N^2\Big(\exp(t^2\sigma_n^2/2)+\sum_{k\neq(i)}\phi_k(W_{l,i,j}^*)\exp(4t+t^2\sigma_n^2/2)\Big)\\
    & \leq 2N^2\Big(\exp(t^2\sigma_n^2/2)+1/2\exp(t^2\sigma_n^2/2)\Big) \\
    & = 3N^2\exp\Big(t^2\sigma_n^2/2\Big) 
\end{align*}
The second inequality is obtained by setting $\tau$ such that $\sum_{k\neq(i)}\phi_k(W_{l,i,j}^*) \leq 1/2$. Note that given a fixed DNN structure, such a $\tau$ always exists as the $\sum_{k\neq(i)}\phi_k(W_{l,i,j}^*)$ monotonically decrease to zero as $\tau$ decrease.
Thus, we can have the 
\begin{align*}   
    \E\Big[\sup_{i,j}|W_{l,i,j}-W_{l,i,j}^*|\Big] &\leq \frac{3N^2}{t} + \frac{t \sigma_n^2}{2} =\sqrt{2\sigma_n^2\log(3N^2)} \leq \sqrt{8\sigma_n^2\log(3N)}
\end{align*}

Next we bound $\int \tilde{W}_l^2 q^*(\mathit{d}x)$, following similar procedure, for some $t > 0$, we can have:
\begin{align}
    \nonumber\exp\Big(\E\big[t\sup_{i,j}(W_{l,i,j}-W_{l,i,j}^*)^2\big]\Big) &\leq N^2\E\Big[\exp(t(W_{l,i,j}-W^*_{i,i,j})^2)\Big]\\
    \nonumber&\leq N^2 \E\Bigg[\E_{\mathcal{N}(\mu_k, \sigma_n^2)}[t(W_{l,i,j}-W_{l,i,j}^*)^2|z_{l,i,j}=k]\Bigg] \\
    \nonumber &=N^2\Big(\phi_{(i)}(W_{l,i,j}^*)\E_{\mathcal{N}({\mu_{(i)}^*, \sigma_n^2})}\Big[\exp(t(W_{l,i,j}-W_{l,i,j}^*)^2)\Big] \\
    &\qquad \qquad+\sum_{k\neq(i)}\E_{\mathcal{N}(\mu_k, \sigma_n^2)}\Big[\exp(t(W_{l,i,j}-W_{l,i,j}^*)^2)\Big]\Big)\label{eq:29}
\end{align}
Note that with a slight abuse of notation, the $z_{l,i,j}$ in the above equation means the latent variable $z_i$ (introduced in (\ref{eq:nmodel})) corresponding to the weight $W_{l,i,j}$.
The first term of Equation~\eqref{eq:29} can be bounded for $t<\frac{1}{2\sigma_n^2}$ as following:
\begin{align*}
    \E_{\mathcal{N}(\mu_{(i)}^*,\sigma_n^2)}\Big[\exp\Big(t(W_{l,i,j}-W_{l,i,j}^*)^2\Big)\Big] &= \E_{\mathcal{N}(\mu_{(i)}^*,\sigma_n^2)}\Big[\exp\Big(t(W_{l,i,j}-\mu_{(i)}^*)^2\Big)\Big]=\frac{1}{\sqrt{1-2t\sigma_n^2}}.
\end{align*}
And the second term of Equation~\eqref{eq:29} can be bounded as:
\begin{align*}
    \E_{\mathcal{N}(\mu_k, \sigma_n^2)}\Big[\exp\Big(t(W_{l,i,j}-W_{l,i,j}^*)^2\Big)\Big] &=  \E_{\mathcal{N}(\mu_k, \sigma_n^2)}\Big[\exp\Big(t(W_{l,i,j}-\mu_{(i)}^*)^2\Big)\Big] \\
    &= \E_{\mathcal{N}(\mu_k, \sigma_n^2)}\Big[ \exp\Big(t(W_{l,i,j}-\mu_{k}^*+\mu_{k}^*-\mu_{(i)}^*)\Big)\Big] \\
    &= \E\Big[\exp\Big(t(\sigma_n z+\mu_{k}^*-\mu_{(i)}^*)^2\Big)\Big] \\
    &= \frac{1}{\sqrt{1-2t\sigma_n^2}}\exp\Big(\frac{(\mu_{k}^*-\mu_{(i)}^*)^2 t}{1-2t\sigma_n^2}\Big) \\
    &\leq \frac{1}{\sqrt{1-2t\sigma_n^2}}\exp\Big(\frac{16t}{1-2t\sigma_n^2}\Big)
\end{align*}
The last inequality is again because of the property that $\sup_{i,j}|\mu_{i}^*-\mu_j^*| \leq 4$.
Thus Equation~\eqref{eq:29} can be bounded by:
\begin{align*}
    \exp\Big(\E\big[t\sup_{i,j}(W_{l,i,j}-W_{l,i,j}^*)^2\big]\Big)&\leq N^2\Bigg(\frac{1}{\sqrt{1-2t\sigma_n^2}}+\sum_{k \neq (i)}\phi_{k}(W_{l,i,j}^*)\frac{1}{\sqrt{1-2t\sigma_n^2}}\exp\Big(\frac{16t}{1-2t\sigma_n^2}\Big)\Bigg) \\
    &\leq 2N^2 \frac{1}{\sqrt{1-2t\sigma_n^2}},
\end{align*}
where the last inequality is because of choosing a small $\tau$ such $\sum_{k\neq(i)}\phi_{k}(W_{l,i,j}^*) \leq \exp(\frac{16t}{1-2t\sigma_n^2})$, such $\tau$ always exist since $\sum_{k\neq(i)}\phi_{k}(W_{l,i,j}^*)$ monotonically decrease to $0$ as $\tau \to 0$.

Thus we can bound $\int \tilde{W}_l^2 q^*(\mathit{d}x)$ by the following:
\begin{align*}
    \int \tilde{W}_l^2 q^*(\mathit{d}x) &= \int \sup_{i,j}(W_{l,i,j}-W_{l,i,j}^*)^2\mathit{d}x \\
    &\leq \log(N^2)/t+\log\Big(\frac{2}{\sqrt{1-2t\sigma_n^2}}\Big)/t \\
    &= 4\log(N^2)\sigma_n^2+4\log\left(\frac{2}{\sqrt{\frac{1}{2}}}\right)\sigma_n^2 \\
    & \leq 8\sigma_n^2 \log(3N)
\end{align*}
By choosing $\sqrt{8\sigma_n^2\log(3N)} \leq B$, we can have:
\begin{align*}
    &\int \Big(B+\tilde{W}_l\Big) q^*(\mathit{d}\theta) \leq 2B ,\\
    &\int\Big(B+\tilde{W}_l\Big)^2 q^*(\mathit{d}\theta) \leq B^2+2B\sqrt{8\sigma_n^2\log(3N)}+8\sigma_n^2\log(3N) \leq 4B^2. 
\end{align*}
Similarly, we can have the following:
\begin{align*}
    \int \tilde{b}_l q^*(\mathit{d}\theta) &\leq \sqrt{8\sigma_n^2\log(3N)}, \\
    \int \tilde{b}_l^2 q^*(\mathit{d}\theta) &\leq 8\sigma_n^2\log(3N).
\end{align*}
Combined with Equation~\eqref{eq:20}, we can have:
\begin{align*}
    \int \norm{f_\theta-f_{\theta^*}}_{\infty}^* q^*(\mathit{d}\theta) &\leq 2 N^{2L-2}\bigg(p+1+\frac{1}{BN-1}\bigg)^2 \bigg( \sum_{\ell=1}^L B^{2\ell-2} (4B^2)^{L-\ell} 8\sigma_n^2\log (3N) \\
    & \qquad + 2 \sum_{\ell=1}^L \sum_{k=1}^{\ell-1} B^{\ell-1} B^{k-1} (4B^2)^{L-\ell} \sqrt{8\sigma_n^2\log (3N)} (2B)^{\ell-k} \sqrt{8\sigma_n^2\log (3N)} \bigg) \\
    & \qquad + 2 \bigg( \sum_{\ell=1}^L N^{2(L-\ell)} (4B^2)^{L-\ell} 8\sigma_n^2\log (3N) \\
    & \qquad + 2 \sum_{\ell=1}^L \sum_{k=1}^{\ell-1} N^{L-\ell} N^{L-k} (4B^2)^{L-\ell} \sqrt{8\sigma_n^2\log (3N)} (2B)^{\ell-k} \sqrt{8\sigma_n^2\log (3N)} \bigg).
\end{align*}
With some algebra, we can have:
\begin{align*}
    \int & \|f_{\theta}-f_{\theta^*}\|_2^2 q_n^*(\mathit{d} \theta) \\
    & \leq 2 N^{2L-2}\bigg(d+1+\frac{1}{BN-1}\bigg)^2 \bigg( B^{2L-2} 8\sigma_n^2\log (3N) \sum_{\ell=0}^{L-1} 4^\ell   + 2 B^{2L-2} 8\sigma_n^2\log (3D) \sum_{\ell=1}^L \sum_{k=1}^{\ell-1} 2^{L-\ell} 2^{L-k} \bigg) \\
    & \quad + 2 \bigg( 8\sigma_n^2\log (3N) \sum_{\ell=1}^L (2BN)^{2L-2\ell} + 16\sigma_n^2\log (3N) \sum_{\ell=1}^L \sum_{k=1}^{\ell-1} (2BN)^{L-\ell} (2BN)^{L-k} \bigg) \\
    & \leq 2 N^{2L-2}\bigg(p+1+\frac{1}{BN-1}\bigg)^2 \bigg( B^{2L-2} 8\sigma_n^2\log (3N) \frac{4^L-1}{4-1} + 2 B^{2L-2} 8\sigma_n^2\log (3N) \sum_{\ell=1}^L 2^{L-\ell} 2^{L-\ell+1} \sum_{k=0}^{\ell-2} 2^k \bigg) \\
    & \quad + 2 \bigg( 8\sigma_n^2\log (3N) \sum_{\ell=0}^{L-1} (2BN)^{2\ell} + 16\sigma_n^2\log (3N) \sum_{\ell=1}^L (2BD)^{L-\ell} (2BN)^{L-\ell+1} \sum_{k=0}^{\ell-2} (2BN)^k \bigg) \\
    & \leq 2 N^{2L-2}\bigg(d+1+\frac{1}{BN-1}\bigg)^2 \bigg( B^{2L-2} 8\sigma_n^2\log (3N) \frac{4^L}{3} 
    + 2 B^{2L-2} 8\sigma_n^2\log (3N) \sum_{\ell=1}^L 2^{L-\ell} 2^{L-\ell+1} 2^{\ell-1} \bigg) \\
    & \quad + 2 \bigg( 8\sigma_n^2\log (3N) \frac{(2BN)^{2L}}{(2BN)^2-1} + 16\sigma_n^2\log (3N) \sum_{\ell=1}^L (2BN)^{L-\ell} (2BN)^{L-\ell+1} \frac{(2BN)^{\ell-1}}{2BN-1} \bigg) \\
    & \leq 2 N^{2L-2}\bigg(d+1+\frac{1}{BN-1}\bigg)^2 \bigg( B^{2L-2} 8\sigma_n^2\log (3N) \frac{4^L}{3} + 2 B^{2L-2} 8\sigma_n^2\log (3N) 2^L \sum_{\ell=0}^{L-1} 2^\ell \bigg) \\
    & \quad + 2 \bigg( 8\sigma_n^2\log (3N) \frac{(2BN)^{2L}}{(2BN)^2-1} + 16\sigma_n^2\log (3N) \sum_{\ell=0}^{L-1} (2BN)^\ell \frac{(2BN)^L}{2BN-1} \bigg) \\
    & \leq 2 N^{2L-2}\bigg(p+1+\frac{1}{BN-1}\bigg)^2 \bigg( B^{2L-2} 8\sigma_n^2\log (3N) \frac{4^L}{3} + 2 B^{2L-2} 8\sigma_n^2\log (3N) 2^{2L} \bigg) \\
    & \quad + 2 \bigg( 8\sigma_n^2\log (3N) \frac{(2BN)^{2L}}{(2BN)^2-1} + 16\sigma_n^2\log (3N) \frac{(2BN)^{2L}}{(2BN-1)^2} \bigg) \\
    & = 2 N^{2L-2}\bigg(p+1+\frac{1}{BN-1} \bigg)^2 8\sigma_n^2\log(3N) \bigg( B^{2L-2} \frac{4^L}{3} + 2 B^{2L-2} 2^{2L} \bigg) \\
    & \quad + 2 \bigg( \frac{(2BN)^{2L}}{(2BN)^2-1} + 2 \frac{(2BN)^{2L}}{(2BN-1)^2} \bigg) 8\sigma_n^2\log(3N),
\end{align*}
as $BN > 2$,
\begin{equation*}
\begin{aligned}
    \int & \|f_{\theta}-f_{\theta^*}\|_{\infty}^2 q^*(\mathit{d} \theta) \\
    & \leq 16 \sigma_n^2\log(3N) \bigg\{ N^{2L-2} \bigg(p+1+\frac{1}{BN-1} \bigg)^2 \frac{7}{3} B^{2L-2} 2^{2L} + (2BN)^{2L} \bigg( \frac{1}{(2BN)^2-1} + \frac{2}{(2BN-1)^2} \bigg) \bigg\} \\
    & = 16 \sigma_n^2\log(3N) \bigg\{ (2BN)^{2L} \frac{1}{(BN)^2} \bigg(p+1+\frac{1}{BN-1} \bigg)^2 \frac{7}{3}  + (2BN)^{2L} \bigg( \frac{1}{(2BN)^2-1} + \frac{2}{(2BN-1)^2} \bigg) \bigg\} \\
    & \leq 16 \sigma_n^2\log(3N) (2BN)^{2L} \bigg\{ \bigg(p+1+\frac{1}{BN-1} \bigg)^2 + \frac{1}{(2BN)^2-1} + \frac{2}{(2BN-1)^2} \bigg\} \\
    & = \frac{s^*}{2n} \leq r_n^*
\end{aligned}
\end{equation*}
The last equality is due to the definition of $\sigma_n$, and the last inequality is due to the definition of $r_n^*$.

In the next step, we aim to bound the integral $\int_{\R^T} l_n(P_0, P_\theta)\mathit{d}\theta$. Note that by definition:
\begin{align*}
    l_n(P_0, P_\theta) &= \frac{1}{2\sigma_{\epsilon}^2}(\norm{Y - f_{\theta}(X)}^2_2 - \norm{Y - f_{0}(X)}^2_2) \\
    &= \frac{1}{2\sigma^2_{\epsilon}} (\|Y - f_{0}(X) + f_0(X)-f_{\theta}(X))\|^2_2 - \|Y - f_{0}(X)\|^2_2)\\
	&= \frac{1}{2\sigma^2_{\epsilon}}(\|f_{\theta}(X)-f_0(X)\|^2_2 + 2\langle Y-f_0(X), f_0(X) - f_\theta(X)\rangle),
\end{align*}
We can define the following:
\begin{align*}
\mathcal{R}_1 &= \int_{\R^T} \|f_{\theta}(X)-f_0(X)\|^2_2 q^*(\theta)(\mathit{d}\theta),\\
\mathcal{R}_2 &= \int_{\R^T}\langle Y-f_0(X), f_0(X)-f_{\theta}(X)\rangle q^*(\theta)(\mathit{d}\theta).
\end{align*}
Since $\|f_{\theta}(X)-f_0(X)\|^2_2 \leq n \|f_{\theta}-f_0\|^2_{\infty}\leq n(r_n^*+\|f_{\theta^*}-f_0\|^2_{\infty})$, it follows that
\begin{align*}
\mathcal{R}_1 \leq n r_n^* + n\|f_{\theta^*}-f_0\|^2_{\infty}.
\end{align*}
Given $Y - f_0(X)=\epsilon\sim\mathcal{N}(0,\sigma_{\epsilon}^2I)$, we have
\begin{align*}
\mathcal{R}_2=\epsilon^T\int_{\Theta}(f_0(X)-f_{\theta}(X))q^*(\theta)(\mathit{d}\theta)\sim\mathcal{N}(0,c_f\sigma_{\epsilon}^2),
\end{align*}
whereby the Cauchy-Schwarz inequality,
\begin{align*}
c_f=\|\int_{\Theta}(f_0(X)-f_{\theta}(X))q^*(\theta)(\mathit{d}\theta)\|^2_2\leq\mathcal{R}_1
\end{align*}
Thus, $\mathcal{R}_2=O_p(\sqrt{\mathcal{R}_1})$, and with high probability, $\mathcal{R}_2\leq C_0'\mathcal{R}_1$ for some positive constant $C_0'$ if $\lim n(r_n^*+\xi_n^*)=\infty$, or for any diverging sequence $C_0'$ if $\lim\sup n(r_n^*+\xi_n^*)\neq\infty$. Therefore,
\begin{equation}\label{eq:lem1_step_1} 
\int_{\R^T} l_n(P_0,P_{\theta})q^*(\theta)(\mathit{d}\theta)\leq C_1'(nr_n^*+\|f_{\theta^*}-f_0\|^2_{\infty})\quad\text{w.h.p.} 
\end{equation}

In the next step, we try to bound the $\KL$ divergence between $q^*$ and $\pi(\theta|\lambda)$, 
\begin{align}
&\KL\Big(q^*(\theta)\|\pi(\theta|\lambda)\Big) \nonumber\\
    \leq &\log\left(\frac{1}{\pi(\gamma^*)}\right) + \sum_{i=1}^T\KL\left[\gamma_i^*\left[\sum_{k}\phi_{k}(\theta_i^*)\mathcal{N}(\mu_k^*,\sigma_n^2)\right]+(1-\gamma_i^*)\delta_0\Big\|\gamma_i^*\mathcal{N}(0,\sigma_0^2)+(1-\gamma_i^*)\delta_0\right] \label{eq:53}\\
    = & \log\frac{1}{\lambda^{s^*}(1-\lambda)^{T-s^*}}+\sum_{i=1}^T\gamma_i^*\KL\left[\sum_k\phi_k(\theta_i^*)\mathcal{N}(\mu_k^*, \sigma_n^2) \| \mathcal{N}(0, \sigma_n^2)\right] \nonumber\\
        \leq & s^*\log\left(\frac{1}{\lambda}\right)+(T-s^*)\log\left(\frac{1}{1-\lambda}\right)+\sum_{i=1}^T \gamma_i^*\sum_{k=1}^{K}\phi_k(\theta_i^*)\KL(\mathcal{N}(\mu_k^*,\sigma_n^2)\|\mathcal{N}(0,\sigma_0^2)) \label{eq:55}\\
        \leq & s^*\log\left(\frac{1}{\lambda}\right)+(T-s^*)\log\left(\frac{1}{1-\lambda}\right)+\sum_{i=1}^T\gamma_i^*\sum_{k=1}^{K}\phi_k(\theta_i^*)\Big[\frac{1}{2}\log\frac{\sigma_0^2}{\sigma_n^2}+\frac{\sigma_n^2+(\mu_k^*)^2}{2\sigma_0^2}-\frac{1}{2}\Big] \nonumber\\
        \leq & s^*\log\left(\frac{1}{\lambda}\right)+(T-s^*)\log\left(\frac{1}{1-\lambda}\right)+\sum_{i=1}^T\gamma_i^*\Big[\frac{1}{2}\log\frac{\sigma_0^2}{\sigma_n^2}+\frac{\sigma_n^2+4}{2\sigma_0^2}-\frac{1}{2}\Big] \nonumber \\
        \leq & C_0 n r_n^* + \frac{s^*}{2}\sigma_n^2+\frac{s^*}{2\sigma_0^2}(B^2-1)+\frac{s^*}{2}\log\left(\frac{\sigma_0^2}{\sigma_n^2}\right) \label{eq:58}\\
        \leq & (C_0+1)nr_n^*+\frac{s^*}{2\sigma_0^2}B^2+\frac{s^*}{2}\log\Bigl(\frac{16n}{s^*}\log(3pN)(2BN)^{2L+2}\Bigl\{(p+1+\frac{1}{BN-1})^2+\frac{1}{(2BN)^2-1}+\frac{2}{(2BN-1)^2}\Bigr\}\Bigr) \nonumber\\
    \leq & (C_0+2)nr_n^*+\frac{s^*}{2\sigma_0^2}B^2+(L+1)s^*\log(2BN)+ \frac{s^*}{2}\log\log(3BN)+\frac{s^*}{2}\log\Bigl(\frac{n}{s^*}p^2\Bigr) \nonumber \\
    \leq &(C_0+3)nr^*_n + (L+1)s^*\log N+s^*\log\Bigl(p\sqrt{\frac{n}{s^*}}\Bigr) \nonumber\\
	\leq & C_1nr^*_n ,  \mbox{ for sufficiently large } n.\nonumber
\end{align}
where the inequality \eqref{eq:53} and \eqref{eq:55} are due the Lemma \ref{lem:mix} and the inequality \eqref{eq:58} is because of the fact that $B > 2$ and $\sum_{i=1}^T \gamma_i^* = s^*$, thus combined with the result in equation \ref{eq:lem1_step_1}, we finish the proof. 
\end{proof}

\newpage

\subsection{Proof of Lemma \ref{lem:lmbound1}}
\begin{proof}

Following previous work~\cite[proof of Lemma 4.2]{bai2020efficient}, we first define the space
\begin{align*}
H_n(\theta) &= \{\theta \in \R^T:\norm{\theta}_0\leq s_n, \norm{\theta}_\infty \leq B+1\}, \\
H'_n(\theta) &= \{\theta \in \R^T:\norm{\theta}_0 > s_n, \norm{\theta}_\infty \leq B+1\}, \\
H''_n (\theta) &= \{\theta \in \R^T: \norm{\theta}_\infty > B+1\}.
\end{align*}

By the above definitions, we now have:
\begin{equation}\label{eq:lem5_def1}
    \int_{\R^T} d^2(P_\theta, P_0)\widehat{q}(\mathit{d}\theta) = \int_{H_n(\theta)} d^2(P_\theta, P_0)\widehat{q}(\mathit{d}\theta)+\int_{H'_n(\theta)} d^2(P_\theta, P_0)\widehat{q}(\mathit{d}\theta)+\int_{H''_n(\theta)} d^2(P_\theta, P_0)\widehat{q}(\mathit{d}\theta).
\end{equation}

Lemma \ref{lmdonsker} presents a variational characterization of the $\KL$ divergence, originally due to Donsker and Varadhan. The proof is available in \cite{Boucheron2013Concentration}.
\begin{lemma}{\label{lmdonsker}}
	Let $\mu$ be any probability measure and $h$ a measurable function with $e^h \in L_1(\mu)$, then
	$$
	\log \int e^{h(\eta)}\mu(d\eta) = \sup_{\rho} \left[\int h(\eta)\rho(d\eta) - \KL(\rho\|\mu)\right].
	$$
\end{lemma}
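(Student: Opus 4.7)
The plan is to prove the variational identity by exhibiting the maximizer explicitly and reducing the supremum to a nonnegativity statement about KL divergence. Let $Z := \int e^{h(\eta)}\mu(d\eta)$, which is finite and positive since $e^h \in L_1(\mu)$. Define the Gibbs (tilted) probability measure $\nu$ by
\begin{equation*}
\frac{d\nu}{d\mu}(\eta) = \frac{e^{h(\eta)}}{Z}.
\end{equation*}
This is a valid probability measure, and my candidate maximizer on the right-hand side is $\rho^* := \nu$.

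Next I would verify the identity by a single chain of substitutions. For any probability measure $\rho$ that is absolutely continuous with respect to $\mu$ (so that $\KL(\rho\|\mu)$ is finite and $\rho\ll\nu$ as well), I expand
\begin{equation*}
\int h\,d\rho - \KL(\rho\|\mu) = \int \log e^{h}\,d\rho - \int \log\frac{d\rho}{d\mu}\,d\rho = \int \log\frac{e^{h}\,d\mu/Z}{d\rho}\,d\rho + \log Z.
\end{equation*}
Recognizing $e^{h}\,d\mu/Z$ as $d\nu$, this equals $\log Z - \KL(\rho\|\nu)$. By Gibbs' inequality, $\KL(\rho\|\nu)\ge 0$ with equality iff $\rho=\nu$, so the right-hand side of the variational formula is at most $\log Z$, with equality attained by $\rho=\nu=\rho^*$.

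Finally I would dispose of the case $\rho\not\ll\mu$: by the standard definition of KL divergence via the Radon–Nikodym derivative (or the supremum over partitions), $\KL(\rho\|\mu)=+\infty$ whenever $\rho$ is not absolutely continuous with respect to $\mu$, so the candidate value $\int h\,d\rho - \KL(\rho\|\mu) = -\infty$ cannot affect the supremum. A minor technical point I would flag but not belabor is integrability of $h$ under $\rho$; for $\rho\ll\mu$ with $\KL(\rho\|\mu)<\infty$, a truncation-and-monotone-convergence argument (or the elementary inequality $xy\le e^x+y\log y - y$) shows $h\in L_1(\rho)$, so the rearrangement is justified. The only real obstacle is this bookkeeping around the domain of $\rho$ and the integrability of $h$; the algebraic core is the one-line rewriting that converts the objective into $\log Z - \KL(\rho\|\nu)$.
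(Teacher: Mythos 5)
Your proof is correct and is the canonical Gibbs-tilting argument for the Donsker--Varadhan variational formula. Note that the paper itself does not reproduce a proof here; it only states the lemma and defers to the reference \cite{Boucheron2013Concentration}, which establishes the identity by the same device you use: introduce the tilted measure $\nu$ with $d\nu/d\mu = e^h/Z$, rewrite the objective as $\log Z - \KL(\rho\|\nu)$, and invoke nonnegativity of KL with equality iff $\rho=\nu$. So in effect you have supplied the proof the paper omits, and you have done so along the same route the cited source takes. Your remarks about the $\rho\not\ll\mu$ case and the $L_1(\rho)$-integrability of $h$ under $\KL(\rho\|\mu)<\infty$ are exactly the right technicalities to flag; they are genuinely needed to make the rearrangement $\int h\,d\rho - \int\log(d\rho/d\mu)\,d\rho = \int\log(d\nu/d\rho)\,d\rho + \log Z$ legitimate term by term, and the inequality $xy \le e^x + y\log y - y$ you mention is the standard way to close that gap.
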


We can define the truncation of distribution $\widehat{q}(\cdot)$ on the set $H_n(\theta)$ denoted as $\check{q}(\cdot)$, (i.e. $\check{q}(\theta) = \widehat{q}(\theta) \mathbbm{1}\{\theta \in H_n(\theta)\}/\widehat{q}(H_n(\theta))$ ), similarly we can also define the $\tilde{\pi}(\theta) = \pi(\theta)\mathbbm{1}\{\theta \in H_n(\theta)\}/\widehat{q}(H_n(\theta))$.
By adopting the arguments from \cite{bai2020efficient}, and following steps analogous to those leading to Equation (17)  therein, we obtain:
\begin{equation}\label{eqn:song:1}
	\int_{H_n(\theta)} \eta(P_{\theta}, P_0)\, \widetilde{\pi}(\theta)\, \mathit{d}\theta \leq \exp\left({C_0 n \varepsilon^{*2}_n}\right), \quad \text{w.h.p.}
\end{equation}
for some constant $C_0 > 0$, where $\log\eta(P_{\theta}, P_0) = l_n(P_{\theta}, P_0) + \frac{n}{3}d^2(P_{\theta}, P_0)$.

Then, given the Lemma \ref{lem:bound_elbo} and equation \eqref{eqn:song:1}, we can show that the first term can be bounded w.h.p. as:
\begin{equation}\label{eqn:1}
	\begin{split}
	&\frac{n}{3\widehat q(H_n(\theta))}\int_{H_n(\theta)} d^2(P_{\theta}, P_0)\widehat{q}(\theta)\mathit{d}\theta \\
    &\leq Cn\varepsilon_n^{*2}+ \mbox{KL}(\check{q}(\theta)\|\widetilde{\pi}(\theta))-\int_{H_n(\theta)} l_n(P_{\theta}, P_0)\check{q}(\theta) \mathit{d}\theta\\
	&=Cn\varepsilon_n^{*2}+\frac{1}{\widehat q(H_n(\theta))}\left( \mbox{KL}(\widehat{q}(\theta)\|\pi(\theta))-\int_{\Theta} l_n(P_{\theta}, P_0)\widehat{q}(\theta) \mathit{d}\theta \right) \\
	&- \frac{1}{\widehat q(H_n(\theta))}\left( \int_{ H_n(\theta)^c} \log \frac{\widehat{q}(\theta)}{\pi(\theta)} \widehat{q}(\theta)\mathit{d}\theta -\int_{H_n(\theta)^c} l_n(P_{\theta}, P_0)\widehat{q}(\theta) \mathit{d}\theta\right)+ \log \frac{\pi(H_n(\theta))}{\widehat q(H_n(\theta))}.
	\end{split}
\end{equation}

Additionally, notice that:
\begin{equation*}
\int_{\theta \in H_n'(\theta)} d^2(P_{\theta}, P_0)\widehat{q}(\theta)\mathit{d}\theta \leq \int_{\theta \in H_n'(\theta)}\widehat{q}(\theta)\mathit{d}\theta =\widehat q(H_n'(\theta)),
\end{equation*}
combined with the fact that $d^2(P_{\theta}, P_0) \leq 1$, the second term of the Equation~\eqref{eq:lem5_def1} can be bounded with high probability as:
\begin{equation}\label{eqn:sum}
	\begin{split}
	\int d^2(P_{\theta}, P_0)\widehat{q}(\theta)\mathit{d}\theta 
	\leq&  3\widehat q(\Theta_n)C\varepsilon^{*2}_n + \frac{3}{n}\left( {\KL}(\widehat{q}(\theta)\|\pi(\theta))-\int_{H_n(\theta)} l_n(P_{\theta}, P_0)\widehat{q}(\theta) \mathit{d}\theta \right) \\
	&+ \frac{3}{n} \int_{H_n'(\theta)^c} l_n(P_{\theta}, P_0)\widehat{q}(\theta) \mathit{d}\theta+ \frac{3}{n} \int_{ H_n'(\theta)^c} \log \frac{\pi(\theta)}{\widehat{q}(\theta)} \widehat{q}(\theta)\mathit{d}\theta \\
    &+ \frac{3\widehat q(H_n(\theta))}{n}\log \frac{\pi(H_n(\theta))}{\widehat q(H_n(\theta))}+\widehat q(H_n'(\theta))+\widehat q(H_n''(\theta)).
	\end{split}
\end{equation}

Following the procedure in~\cite[Lemma 4.2, equation 20]{bai2020efficient}, we can show with high probability that:
\begin{align*}
\int & d^2(P_{\theta}, P_0)\widehat{q}(\theta)\mathit{d}\theta  \\
    &\leq 3\widehat q(H_n)C\varepsilon^{*2}_n + \frac{3}{n}\left( \KL(\widehat{q}(\theta)\|\pi(\theta))-\int l_n(P_{\theta}, P_0)\widehat{q}(\theta) \mathit{d}\theta \right) + \frac{3}{n} \int_{H_n^c} l_n(P_{\theta}, P_0)\widehat{q}(\theta) \mathit{d}\theta \\
    &\qquad + \frac{3}{n} \int_{ H_n^c} \log \frac{\pi(\theta)}{\widehat{q}(\theta)} \widehat{q}(\theta)\mathit{d}\theta + \frac{3\widehat q(H_n)}{n}\log \frac{\pi(H_n)}{\widehat q(H_n)}+\widehat q(H_n'(\theta))+\widehat q(H_n''(\theta)) \\
    &= 3\widehat q(H_n)C\varepsilon^{*2}_n + \frac{3}{n}\left( \KL(\widehat{q}(\theta)\|\pi(\theta))-\int_{\R^T} l_n(P_{\theta}, P_0)\widehat{q}(\theta) \mathit{d}\theta \right)+ \frac{3}{n} \int_{H_n^c} l_n(P_{\theta}d, P_0)\widehat{q}(\theta) \mathit{d}\theta \\
	&\qquad + \frac{3}{n} \int_{ H_n^c} \log \frac{\pi(\theta)}{\widehat{q}(\theta)} \widehat{q}(\theta)\mathit{d}\theta + \frac{3\widehat q(H_n)}{n}\log \frac{\pi(H_n)}{\widehat q(H_n)}+\widehat q(H'_n)+\widehat{q}(H''_n) \\
    &\leq C\varepsilon^{*2}_n + \frac{3}{n}\left( \KL(\widehat{q}(\theta)\|\pi(\theta))-\int_{\R^T} l_n(P_{\theta}, P_0)\widehat{q}(\theta) \mathit{d}\theta \right) + O(1/n) + \widehat{q}(H''_n),
\end{align*}
where $C$ is some constant. Next, we show that $\widehat{q}(H''_n(\theta)) = O(\varepsilon_n^{*2})$.

\begin{lemma} \label{lem:bound}Given the Condition \ref{cond:bound}, $\widehat{q}(H''_n(\theta)) = O(\varepsilon_n^{*2})$ holds.
\end{lemma}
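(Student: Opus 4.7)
The plan is to control the event $\{\|\theta\|_\infty > B+1\}$ coordinate-by-coordinate via a union bound, then exploit the fact that the slab means $\mu_k$ lie in $[-B,B]$ together with the sharp upper bound on the component variances $\sigma_k^2$ imposed by Condition \ref{cond:bound}. Specifically, I would first write
\begin{equation*}
\widehat{q}(H''_n(\theta)) = \widehat{q}(\|\theta\|_\infty > B+1) \le \sum_{i=1}^T \widehat{q}(|\tilde{\theta}_i| > B+1),
\end{equation*}
and note that under the marginal variational distribution in Equation~\eqref{eq:marginal_vb}, the spike component at $0$ contributes no mass to $\{|\tilde{\theta}_i|>B+1\}$, so each summand equals $\tilde{\lambda}_i \sum_{k=1}^K \phi_k(\theta_i)\Pr_{X\sim\mathcal{N}(\mu_k,\sigma_k^2)}(|X|>B+1)$.

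Second, I would invoke the triangle inequality: since $|\mu_k|\le B$ by Condition \ref{cond:bound}, any $X$ with $|X|>B+1$ automatically satisfies $|X-\mu_k|>1$. A standard Gaussian tail bound then gives $\Pr(|X-\mu_k|>1)\le 2\exp(-1/(2\sigma_k^2))$. The second half of Condition \ref{cond:bound} now pays off: with $\sigma_k^2\le 1/(2\log(T/(r_n^*\log^2 n)))$, we get $\exp(-1/(2\sigma_k^2))\le r_n^*\log^2 n / T$. Collecting these estimates, using $\tilde{\lambda}_i\le 1$ and $\sum_k \phi_k(\theta_i)=1$, yields
\begin{equation*}
\widehat{q}(|\tilde{\theta}_i|>B+1) \;\le\; \frac{2 r_n^*\log^2 n}{T},
\end{equation*}
so the union bound produces $\widehat{q}(H''_n(\theta)) \le 2 r_n^*\log^2 n$.

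Finally, I would compare against the target rate $\varepsilon_n^{*2}=r_n^*\log^{2\delta}(n)$ with $\delta>1$: since $\log^2 n \le \log^{2\delta}n$, the bound $2r_n^*\log^2 n = O(r_n^*\log^{2\delta} n)=O(\varepsilon_n^{*2})$ holds, finishing the proof.

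The main obstacle I anticipate is not the Gaussian tail estimate itself, which is routine, but rather being careful that the argument is uniform across mixture components and across coordinates; the tuning of $\sigma_k^2$ in Condition \ref{cond:bound} is precisely what makes the per-coordinate tail shrink like $1/T$, which is exactly the magnitude needed for the union bound to survive after multiplication by $T$. No delicate concentration or empirical-process machinery is required at this step; the work has been absorbed into the choice of variance scale.
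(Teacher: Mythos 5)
Your proof is correct and follows essentially the same route as the paper's: a coordinate-wise union bound, reduction of each mixture-component tail to a one-dimensional Gaussian tail exceeding distance $1$ from its mean, and using the variance cap in Condition~\ref{cond:bound} to make each per-coordinate tail of order $1/T$ before multiplying by $T$. The only cosmetic difference is that the paper collapses the mixture to a single worst-case Gaussian with variance $\sigma_m^2 = \max_k \widehat{\sigma}_k^2$ centered at $B$ and targets $\varepsilon_n^{*2}/T$ directly, whereas you bound each component and then compare $r_n^*\log^2 n$ against $\varepsilon_n^{*2}=r_n^*\log^{2\delta}n$ at the end; both are valid and give the same conclusion up to constants.
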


\begin{proof}
Let $\widehat{\sigma}_m = \max\{\widehat{\sigma}_1, \dots, \widehat{\sigma}_K\}$, then by definition of the variational distribution $q(\theta)$,we can know that:
\begin{equation*}
    \widehat{q}(H''_n(\theta)) \leq \sum_{i=1}^T \widehat{q}(|\theta_i| > (B+1)) = \sum_{i=1}^T \widehat{q}(\theta_i> (B+1)) + \widehat{q}(\theta_i<-(B+1)).
\end{equation*}
By the definition of variational distribution $\widehat{q}(\cdot)$, we know:
\begin{align*}
\widehat{q}(\theta_i > B+1) &\leq \int_{B+1}^\infty \frac{1}{\sqrt{2\pi\sigma_m^2}}\exp\Big(\frac{-(t-B)^2}{2\sigma_m^2}\Big)\mathit{d}t, \\
\widehat{q}(\theta_i < -(B+1)) &\leq \int^{-(B+1)}_{-\infty} \frac{1}{\sqrt{2\pi\sigma_m^2}}\exp\Big(\frac{-(t-B)^2}{2\sigma_m^2}\Big)\mathit{d}t. 
\end{align*}
And by Chernoff bound and the fact that $\sigma_m^2 \leq \frac{1}{2\log(T/\varepsilon_n^{*2})}$, we can have:
\begin{equation*}
    \int_{B+1}^\infty \frac{1}{\sqrt{2\pi\sigma_m^2}}\exp\Big(\frac{-(t-B)^2}{2\sigma_m^2}\Big)\mathit{d}t \leq \frac{1}{2}\exp\Big(-\frac{1}{2\sigma_m^2}\Big) \leq \frac{\varepsilon_n^{*2}}{2T}. 
\end{equation*}
Thus we show that $\widehat{q}(H''_n) \leq \varepsilon_n^{*2}$.
\end{proof}

Then by Lemma \ref{lem:bound}, we complete the proof. 
\end{proof}

\newpage

\section{Implementation of \method} \label{apx:implement}

\begin{algorithm}[!ht]
	\caption{SQS: Variational learning and inference for sparse and quantized sub-distribution.} \label{alg:training}
	\begin{algorithmic}[1]
    \Statex \hspace{-20pt} \texttt{\color{blue}// Variational Learning}
	\Require{Training data $D$; Full-precision weights $(\theta_1,\ldots,\theta_T)$; \#components in GMM $K$; initial temperature $\tau, \tau'$; prior variance $\sigma_0^2$.}
 % \Ensure The sparse quantized weight sub-distribution $\widehat{q}(\tilde\theta)$.
		\State  Initialize trainable parameters
		$ \{(\hat{\mu}_k, \hat{\pi}_k,\hat{\sigma}_k)\}_{k=0}^{K}$;
	\While{\textnormal{not converged}} 
	    \State calculate the approximate objective $\Omega_{\mathtt{apx}}$; \Comment{in Equation~\eqref{eq:approx_elbo}}
	    \State update learnable parameters with the stochastic gradient descent;
	\EndWhile
\Return the sparse quantized weight sub-distribution $\widehat{q}(\tilde\theta)$.
% \vspace{5pt}
\Statex \hspace{-20pt} \texttt{\color{blue}// Variational Inference}
\Require A sparse quantized weight distribution $\widehat{q}(\tilde\theta)$; \#Bayesian average $M$; percentage of non-zero weights \texttt{Non-zero} (\%).
% \Ensure Bayesian inferences such as $\widehat{y}$.
    \For{$n \leftarrow 1$ to $M$} 
    \State  sample quantized weight $\tilde{\theta}^{m}$ from the posterior $\widehat{q}(\tilde\theta)$; \Comment{in Equation~\eqref{eq:infer-theta}}
    \State  get pruned weight $\tilde{\theta}^m$  with \texttt{Non-zero}, according to $\hat{\lambda}_i$; 

    \EndFor
    \State predict output $\widehat{y}$ for each testing input $x$, using Bayesian average with $\{\tilde{\theta}^1,\ldots,\tilde{\theta}^M\}$; \Comment{in Equation~\eqref{eq:infer-bam}}
     \Statex\Return The set of predicted outputs.
\end{algorithmic}
\end{algorithm}

\paragraph{Pretrained model setting.}
Taking the compression of the Llama3.2-1B model as an example, we first download the pre-trained model from Hugging Face\footnote{\url{https://huggingface.co/meta-llama/Llama-3.2-1B}} using the Python package ``\texttt{transformers}''. We then fine-tune the model on the considered SST-2 task before applying our \method for compression. We find that omitting the fine-tuning step significantly degrades the performance of \method.

For ResNet models, we use publicly available pre-trained models obtained by the Python package ``\texttt{timm}'' on the CIFAR-10 and CIFAR-100 datasets and directly apply our \method for compression.

For the Bert-base model, we download the pre-trained model from Hugging Face\footnote{\url{https://huggingface.co/huggingface-course/bert-finetuned-squad}} using the Python package ``\texttt{transformers}''.

\paragraph{Initialization.}
To initialize the learnable parameters of our \method method, denoted as $\{\mu_k, \sigma_k, \pi_k\}_{k=1}^K$, we employ the K-means algorithm. Specifically, the DNN weights of a given layer are first clustered into $K$ groups. For each group $G_k$, the mean $\mu_k$ and standard deviation $\sigma_k$ are computed as the empirical statistics of the weights in that group, while the mixture coefficient $\pi_k$ is set to the proportion of weights in group $k$ relative to the total number of weights in the layer.

We assume that K-means yields $K$ disjoint groups of weights, denoted as $\{G_1, \dots, G_K\}$, such that $\bigcup_{k=1}^K G_k$ covers all weights in the selected layer. The initial parameters are then defined as:
\begin{align*}
\mu_k = \frac{1}{|G_k|} \sum_{\theta_i \in G_k} \theta_i, \qquad
\sigma_k = \sqrt{\frac{1}{|G_k| - 1} \sum_{\theta_i \in G_k} (\theta_i - \mu_k)^2}, \qquad
\pi_k = \frac{|G_k|}{\sum_{j=1}^K |G_j|}.
\end{align*}

\paragraph{Implementing marginal $q(\theta_i)$.}
To make the distribution differentiable, we reparameterize $\tilde{\lambda}_i$ (as defined in Equation~\ref{eq:marginal_vb}) using the following equation:
\begin{align} \label{eq:lambda-prime}
\tilde{\lambda}_i = \frac{\exp(\tilde{s}_i / \tau')}{1 + \exp(\tilde{s}_i / \tau')},
\end{align}
where the temperature $\tau'$ is set to a fixed constant to stabilize the training process. To better exploit the learned pruning parameters in later stages of training, we halve $\tau'$ after completing half of the total training steps to stabilize the training and exploit more around the optimal.

\paragraph{Implementing quantization.}

We observe that weight distributions vary significantly across layers, including Gaussian and long-tailed forms.  
In particular, long-tailed distributions contain a small subset of weights with large magnitudes, as illustrated in Figure~\ref{fig:attention-compare-extend} for the Llama model.
And the previous works \cite{nagel2020up, hubara2021accurate, frantar2022optimal} have demonstrated that layer-wise compression methods lead to better performance. 

To address performance degradation arising from such heterogeneous distributions, we extend our proposed method to support \textit{layer-wise quantization}, where each group of weight parameters within a layer is assigned its own quantization set. This enables each layer to learn and utilize a distinct, trainable quantization set tailored to its distribution.

For layers with long-tailed distributions, we further introduce an \textit{outlier-aware windowing} strategy. Specifically, the weights $\theta$ in each layer are partitioned into four windows, with two dedicated to capturing the lower and upper tails of the distribution. To identify these tail regions, we apply a standard outlier detection rule based on the $5\times$ interquartile range (IQR): let $q_1$ and $q_3$ denote the first and third quartiles of the weights, and define $\texttt{IQR} = q_3 - q_1$. The outlier-aware windows are then defined as $[\min(\theta), q_1 - 5 \times \texttt{IQR}]$ and $[q_3 + 5 \times \texttt{IQR}, \max(\theta)]$. Within each of the four windows in every layer, we fit a $K$-component GMM to approximate the local weight distribution. 

We adopt the layer-wise quantization scheme with outlier-aware windowing in all our experiments. This approach improves the preservation of extreme values during quantization and enhances robustness across layers. An ablation study evaluating the effectiveness of outlier-aware windowing is presented in Figure~\ref{fig:windowing}.

\paragraph{Hyperparameter Configuration.}
The number of Gaussian components is set to $K=16$, which balances compression rate and performance degradation. This setting is further analyzed in the case study shown in Figure~\ref{fig:quantization_compare}. All models are trained on an NVIDIA H100 GPU with 80GB of memory.

During training and testing, for ResNet-18, ResNet-20, ResNet-32, ResNet-50, ResNet-56, BERT-base, LLaMA3.2-1B, and Qwen2.5-0.5B models, the settings are:
\begin{itemize}
    \item \textbf{Training Time:} Approximately 30 minutes for ResNet models (ResNet-18 through ResNet-56); 4 hours for BERT-base; 24 hours for both LLaMA3.2-1B and Qwen2.5-0.5B.
    \item \textbf{Optimizer:} AdamW is used consistently across all models.
    \item \textbf{Quantization Learning Rate:} $5\times10^{-4}$ for ResNet-18; $5\times10^{-5}$ for all other models.
    \item \textbf{Pruning Learning Rate:} Fixed at $0.012$ for all models.
    \item \textbf{Temperature Hyperparameters:}   $\tau$ in Eq.~\ref{eq:marginal_vb} is set to $5\times10^{-4}$ and $\tau'$ in Eq.~\ref{eq:lambda-prime} is set to $0.0125$.
    \item \textbf{Pruning Schedule:} A polynomial schedule is used for all models.
\end{itemize}

\newpage

\section{Experiment Settings}\label{apx:exp-set}

\subsection{Experiment settings for benchmark with all baselines}

\paragraph{Benchmark compression on ResNet models.}  We present experiments using ResNet architectures on the CIFAR-10 and CIFAR-100 datasets. 
When compressing ResNet models, our method requires fine-tuning over the training dataset, completing the compression process within $10$ epochs. 
To achieve high compression rates, we represent each layer's weights with either $4$ or $16$ components (i.e. $K = 4$ or $K=16$ for each layer) and apply a sparsity level of $50\%$. As shown in Table \ref{tab:cifar10}, our methods compress the models by factors ranging from $16\sim32\times$ while keeping accuracy drops below $1.3\%$.
For example, compressing ResNet-20 by a factor of $16$ results in an accuracy drop of only $0.52\%$. Likewise, compressing ResNet-32 by a factor of $32\times$ yields a minimal accuracy reduction of $1.29\%$. 
Additionally, we compress ResNet-56 by a factor of $32$, observing an accuracy drop of only $0.84\%$.
Compared to other methods, our approach achieves much higher compression rates with smaller decreases in accuracy. 

\paragraph{Benchmark compression on Bert-base model.} We further investigate our compression method on attention-based models. We apply our compression model on BERT-base~\citep{devlin2018bert}  model and test it on the SQuAD V1.1 dataset \citep{rajpurkar2016squad}. Similarly, we consider the F1 score drop and compression rate as the evaluation metrics. During the compression process, the BERT model is fine-tuned on the training dataset, with the entire procedure completed within $3$ epochs.

We compressed the BERT model using $K=16$ Gaussian components and pruned $75\%$ of its parameters, leading to a $32\times$ compression rate. We employed layer-wise quantization combined with unstructured pruning to attain these results.

\paragraph{Benchmark compression on Llama and Qwen models.}

Due to hardware limitations, we cannot run very large-scale LLMs, which are Llama3.1-8B and Qwen2.5-7B.

\subsection{Experiment settings for ablation studies for \method method}\label{apx:gaussian_prior}

\paragraph{Impact of different window strategies.}  For \textit{Equal Window} strategy,
Given a layer of weights $\theta$, we group the weights into 4 windows where each one has an equal window size $\frac{\max(\theta)-\min(\theta)}{4}$. Within each window, a $K$-components GMM is applied to approximate the weights distribution.

\paragraph{Impact of different inference strategies.} The alternative is \textit{greedy approach} is to greedily select the most likely weight for making predictions on the test set. Specifically, for each quantized weight $\theta_i$, we choose the index $k^*$ corresponding to the highest posterior probability $\widehat{\phi}_{k^*}(\tilde{\theta}_i)$. 
The quantized weight is then set to the mean $\mu_{k^*}$ of the selected component, and the predicted output $\widehat{y}$ is computed using these selected means. 
Formally, this greedy inference strategy is given by:
\begin{equation}\label{eq:greedy}
\begin{aligned}
&\widehat{y} = f(x; \hat{\mu}_{k^*}), \qquad k^* = \arg\max_{k} \widehat{\phi}_k(\tilde{\theta}_i), \qquad Q(\theta_i) = \mu_{k^*}. \\
\end{aligned}
\end{equation}
We empirically compare the greedy inference approach with Bayesian averaging in the ablation study shown in Figure~\ref{fig:quantization_compare}.

\paragraph{Impact of different priors.}  For comparison, we consider a zero-mean Gaussian distribution as the prior and replace the delta distribution with a Gaussian distribution in the variational family.
That is, any $q'(\theta) \in \mathcal{F}'$ has the form:
\begin{align*}
    \tilde{\theta}_i | \gamma_i \sim \gamma_i  \sum_{k=1}^{K} \phi_k(\theta_i) \mathcal{N}(\mu_k, \sigma^2_k) +(1-\gamma_i) \mathcal{N}(0, \sigma^2_{0}), &\qquad  \gamma_i \sim \texttt{Bern}(\tilde{\lambda}_i).
\end{align*}
Based on this, we can get the modified marginal variational distribution $q'(\tilde{\theta}_i)$ as:
\begin{equation}
    q'(\tilde{\theta}_i)=\tilde{\lambda}_i  \sum_{k=1}^{K} \phi_k(\theta_i) \mathcal{N}(\mu_k, \sigma^2_k) + (1-\tilde{\lambda}_i) \mathcal{N}(0, \sigma^2_{0}).
\end{equation}
Thus, following the same reasoning and derivation as we get the equation \eqref{eq:approx_elbo}, The Gaussian prior can be obtained by:
\begin{align}
    \nonumber \Omega' &= -\E_{q'(\tilde{\theta})}\big[\log p(D;\tilde{\theta})\big]+\sum_{i=1}^{T}\KL\left(q'(\tilde{\theta}_i) \| \mathcal{N}(0, \sigma^2_0)\right) \\
    \nonumber &= -\E_{q'(\tilde{\theta})}\big[\log p(D;\tilde{\theta})\big]+\sum_{i=1}^{T}\KL\left(q'(\tilde{\theta}_i) \| (\tilde{\lambda}_i +(1-\tilde{\lambda}_i))\mathcal{N}(0, \sigma^2_0)\right) \\
    &\leq -\E_{q'(\tilde{\theta})}\big[\log p(D;\tilde{\theta})\big] + \sum_i^{T}\tilde{\lambda}_i\KL\left(\sum_{k=1}^{K} \phi_k(\theta_i) \mathcal{N}(\mu_k, \sigma^2_k) \Big\| \mathcal{N}(0, \sigma_0^2)\right). \tag{Gaussian prior}
    \label{eq:elbo_upperbound_normal}
\end{align}
We compare the impact of the above Gaussian prior with the proposed Spike-and-GMM priors and summarize the result in Table~\ref{tab:prior_compare}.

\textbf{Description of Baselines.} For the following lines of baselines, we use the reported results in their papers:
\begin{itemize}
\item Optimal Brain Surgeon (OBS)~\citep{hassibi1993optimal} selects weights for removal from a trained neural network using second-order information. 
\item BitSplit~\citep{wang2020towards} incrementally constructs quantized values using a squared error metric based on residual errors. 
\item AdaQuant~\citep{hubara2021accurate} utilizes STE for direct optimization. 
\item BRECQ~\citep{li2021brecq} integrates Fisher information into the optimization process and focuses on the joint optimization of layers within individual residual blocks.

    \item Exact Optimal Brain Quantization (OBQ)~\citep{frantar2022optimal} adapts second-order weight pruning methods to quantization tasks.
    \item GPTQ~\citep{frantar2022gptq} employs second-order information for error compensation on calibration sets to speed up generative models.
\end{itemize}

We adopt their implemented code and use the same setting for training and testing:
\begin{itemize}
    \item AWQ~\citep{lin2024awqactivationawareweightquantization} implements activation-aware quantization, selectively bypassing the quantization of key weights\footnote{\url{https://github.com/mit-han-lab/llm-awq}}. This method is a training-free method, which does not need extra training on the selected dataset.
\item  DGMS~\citep{dong2022finding} is an automated quantization method that utilizes  Mixtures of Gaussian to avoid the aforementioned problem\footnote{\url{https://github.com/RunpeiDong/DGMS}}.  We use their code base and configure it with the same hyperparameters. Their algorithm is trained on the same dataset for fairness of comparison.
\end{itemize}

\paragraph{Definition of Evaluation Metrics.}
Let $K$ denote the number of shared weight vectors. Then, the metric \texttt{Bits} is defined as $\log_2 K$.

\section{Experiment Results for Full-precision Weight Distribution Visualization}\label{apx:extended_result}
% \subsection{Full-precision weight distribution visualization.}
We present the visualization of the long-tailed weight distributions for the Llama3.2 model in Figures~\ref{fig:llama3-weight-p1} and~\ref{fig:llama3-weight-pa2}, and for the Qwen2.5 model in Figures~\ref{fig:qwen-weight-p1} and~\ref{fig:qwen-weight-p2}.

\begin{figure}[!t]
    \centering
\includegraphics[width=0.8\linewidth]{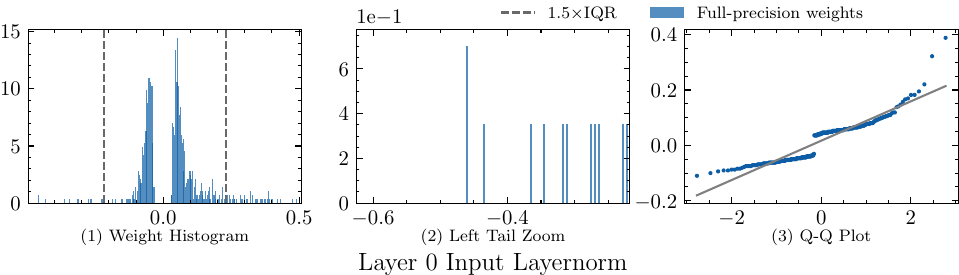}
\includegraphics[width=0.8\linewidth]{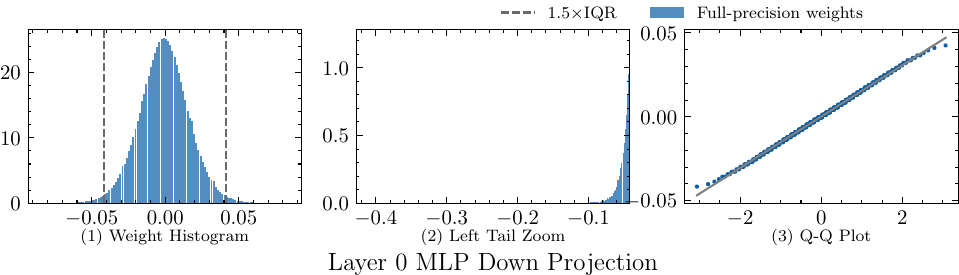}
\includegraphics[width=0.8\linewidth]{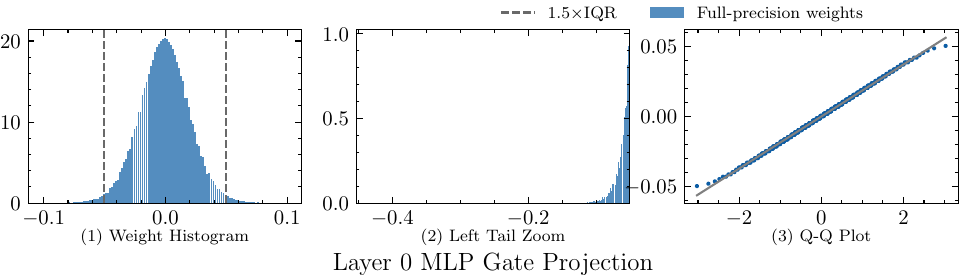}
\includegraphics[width=0.8\linewidth]{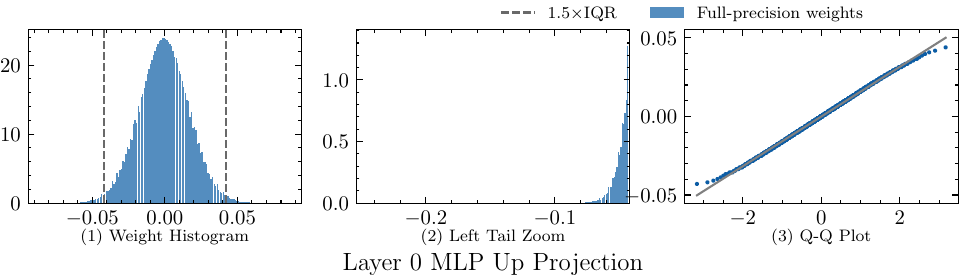}
\includegraphics[width=0.8\linewidth]{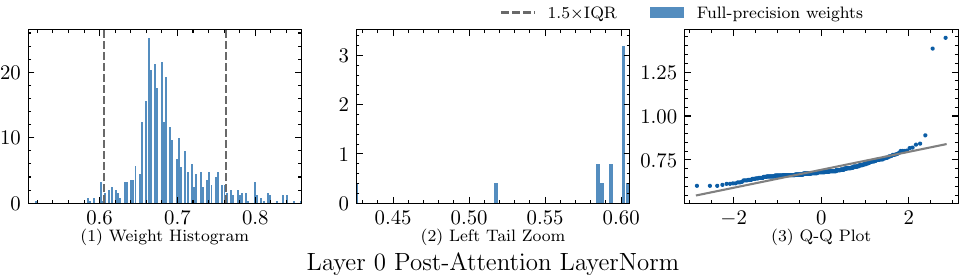}
\includegraphics[width=0.8\linewidth]{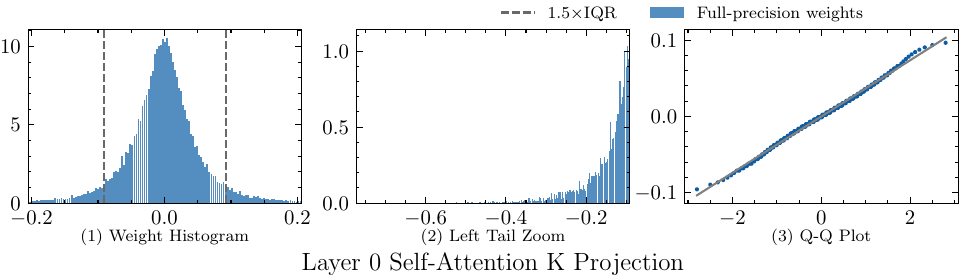}
    \caption{Weight distribution of different layers in Qwen2.5 model (part 1).}
    \label{fig:qwen-weight-p1}
\end{figure}

\begin{figure}[!t]
    \centering
\includegraphics[width=0.8\linewidth]{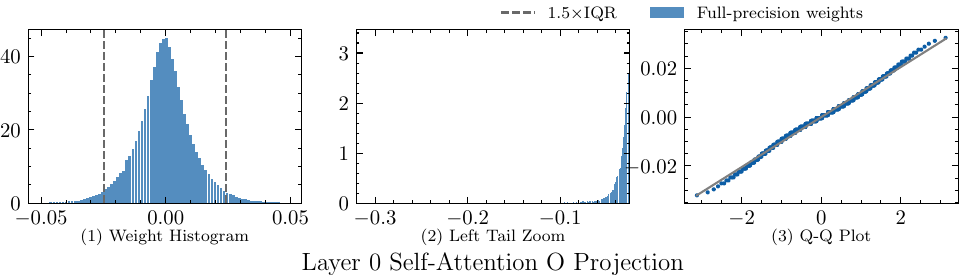}
\includegraphics[width=0.8\linewidth]{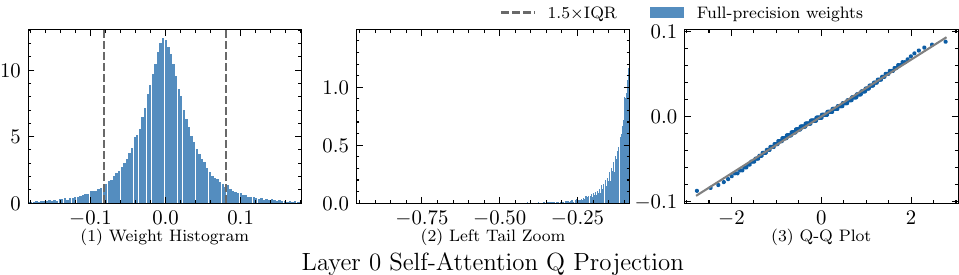}
\includegraphics[width=0.8\linewidth]{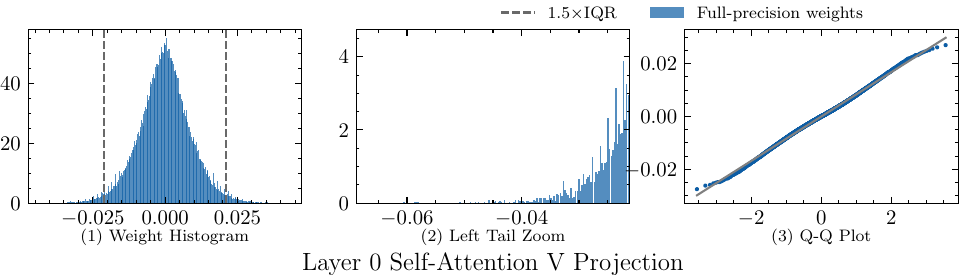}
\includegraphics[width=0.8\linewidth]{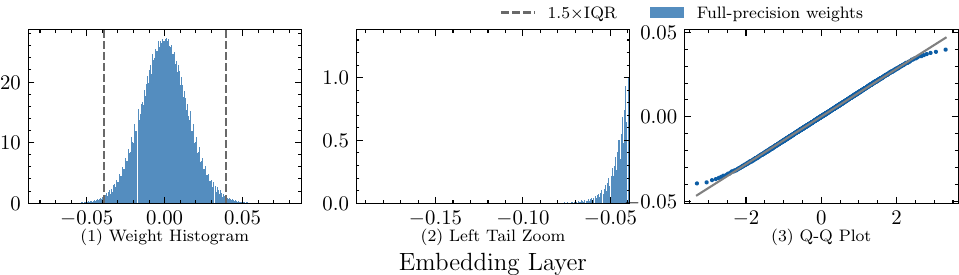}
\includegraphics[width=0.8\linewidth]{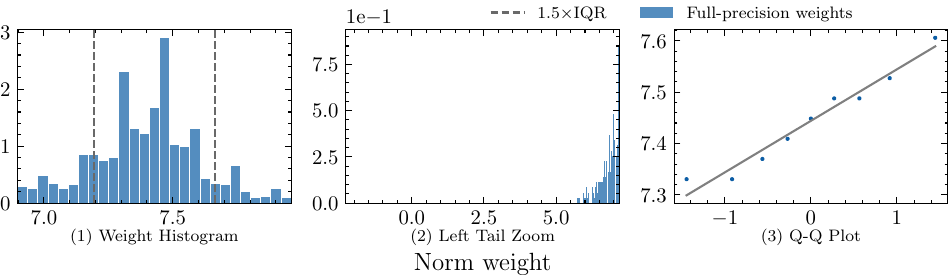}
\includegraphics[width=0.8\linewidth]{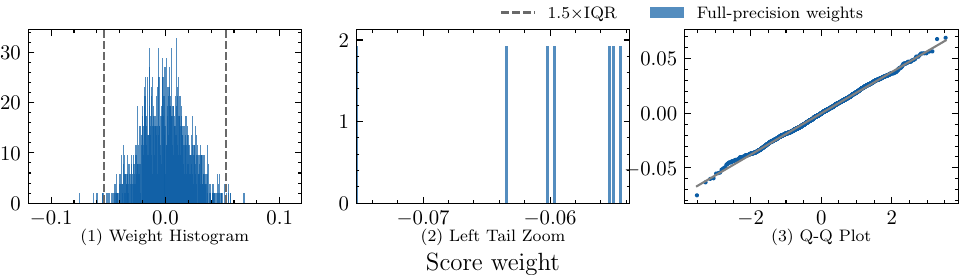}
    \caption{Weight distribution of different layers in Qwen2.5 model (part 2).}
    \label{fig:qwen-weight-p2}
\end{figure}

\begin{figure}[!t]
    \centering
\includegraphics[width=0.8\linewidth]{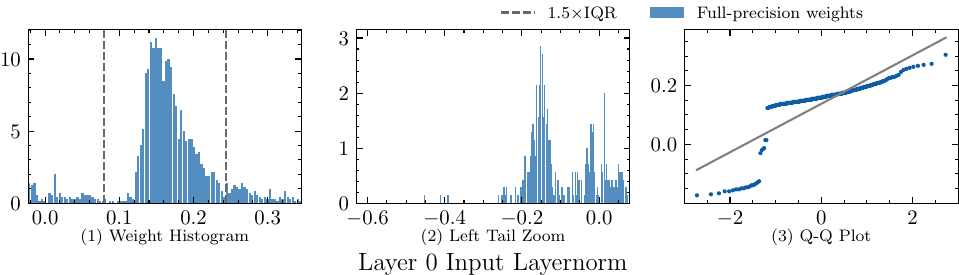}
\includegraphics[width=0.8\linewidth]{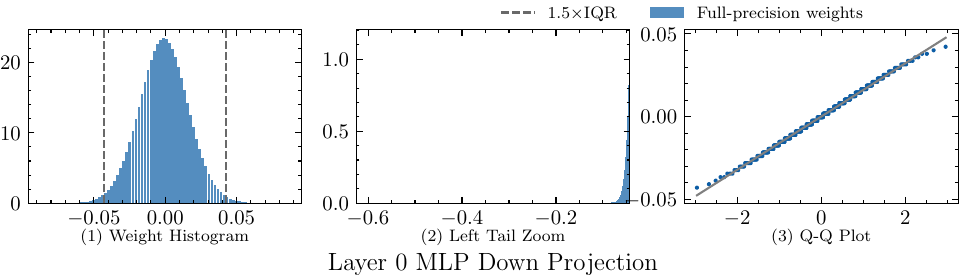}
\includegraphics[width=0.8\linewidth]{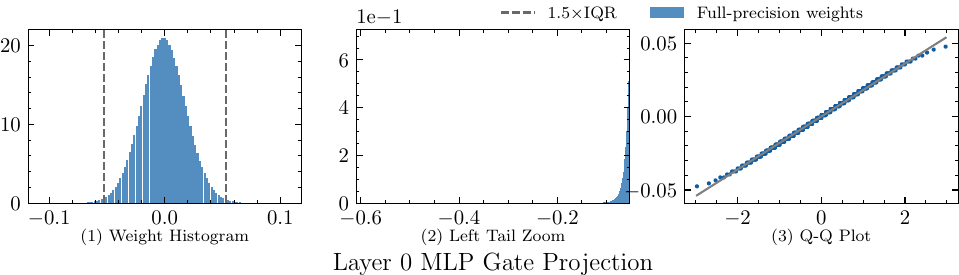}
\includegraphics[width=0.8\linewidth]{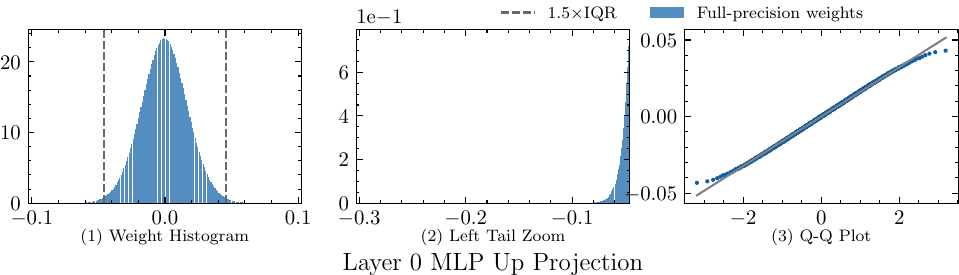}
\includegraphics[width=0.8\linewidth]{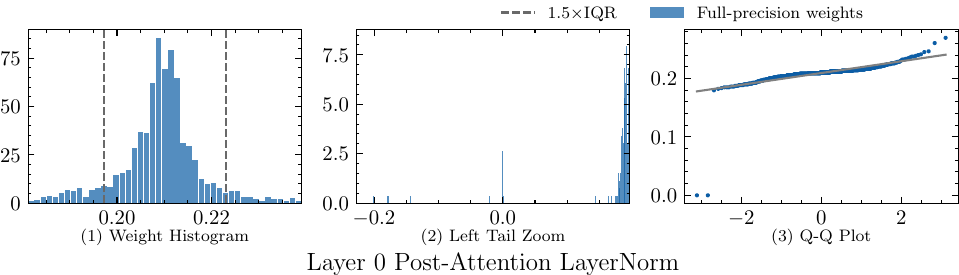}
\includegraphics[width=0.8\linewidth]{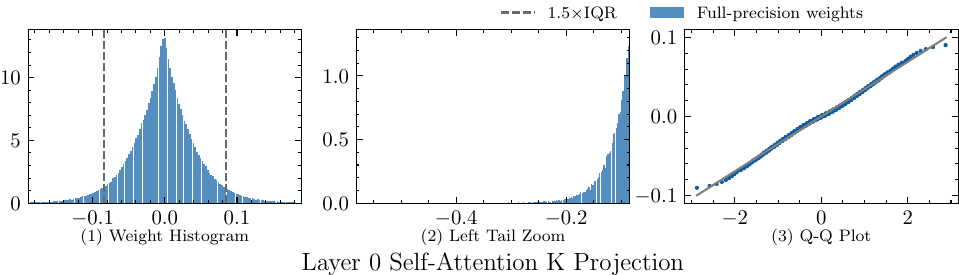}
    \caption{Weight distribution of different layers in Llama3 model (part 1).}
    \label{fig:llama3-weight-p1}
\end{figure}

\begin{figure}[!t]
    \centering

\includegraphics[width=0.8\linewidth]{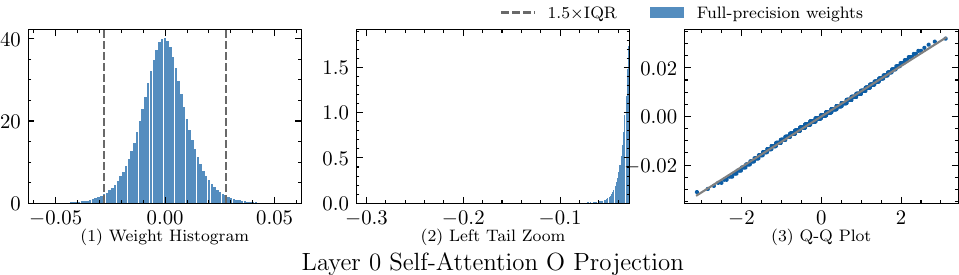}
\includegraphics[width=0.8\linewidth]{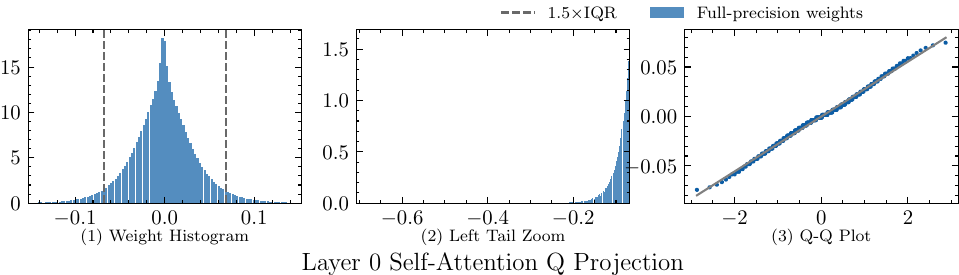}
\includegraphics[width=0.8\linewidth]{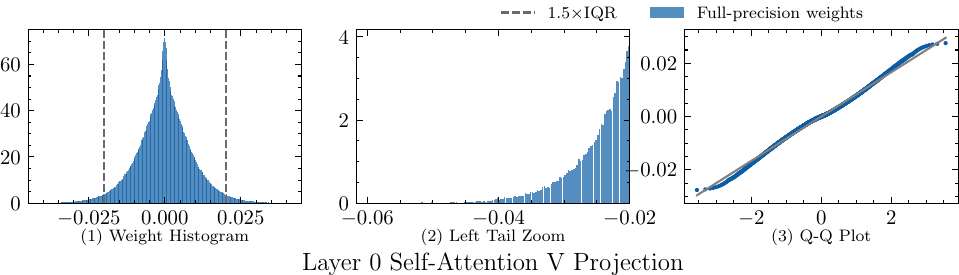}
\includegraphics[width=0.8\linewidth]{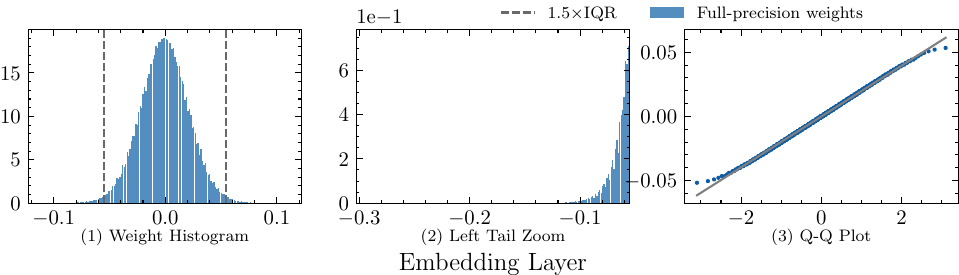}
\includegraphics[width=0.8\linewidth]{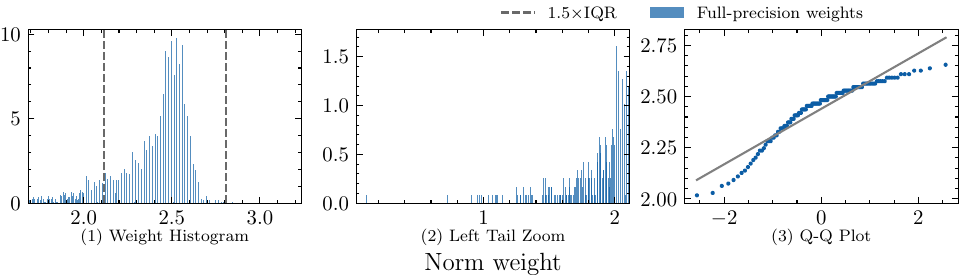}
\includegraphics[width=0.8\linewidth]{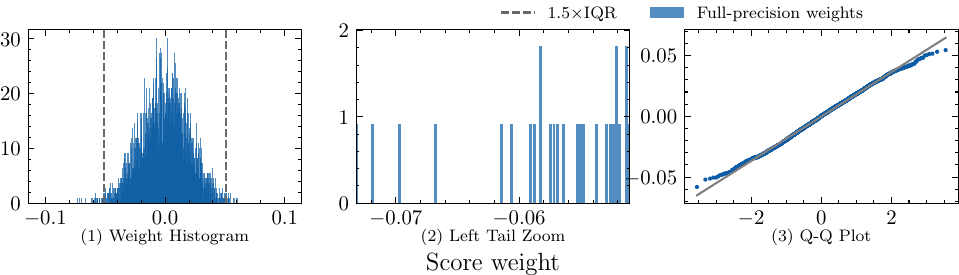}
    \caption{Weight distribution of different layers in Llama3 model (part 2).}
    \label{fig:llama3-weight-pa2}
\end{figure}

\begin{figure}[!ht]
    \centering
    \includegraphics[width=.8\linewidth]{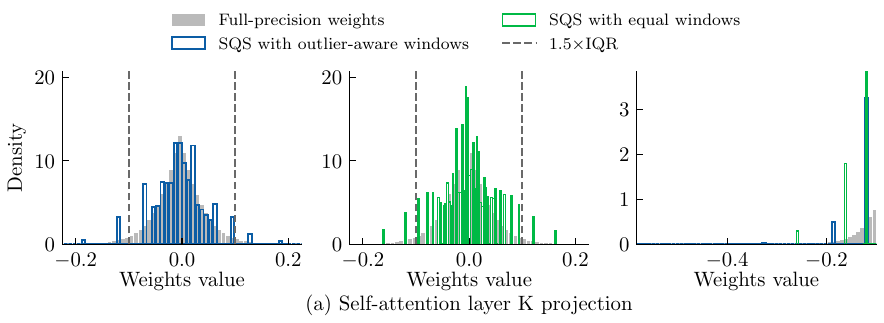}
    \includegraphics[width=.8\linewidth]{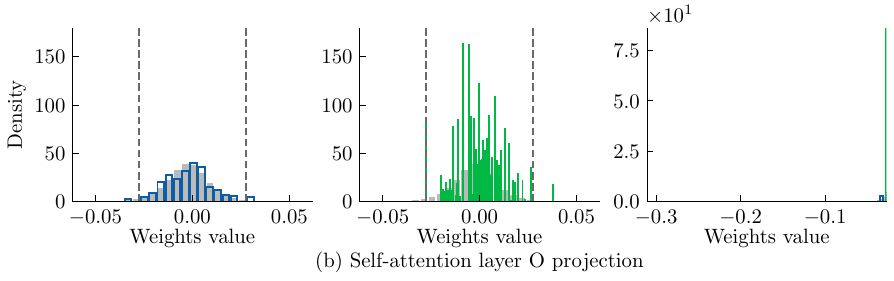}
    \includegraphics[width=.8\linewidth]{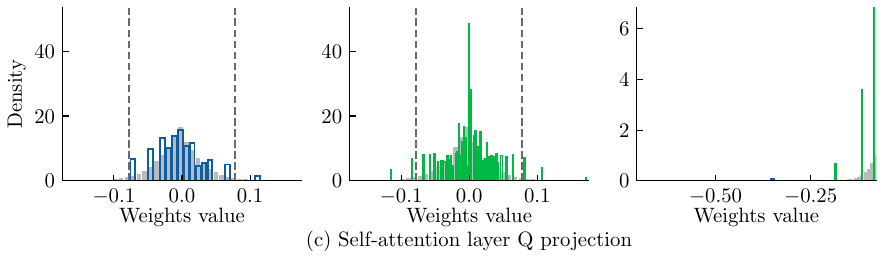}
    \includegraphics[width=.8\linewidth]{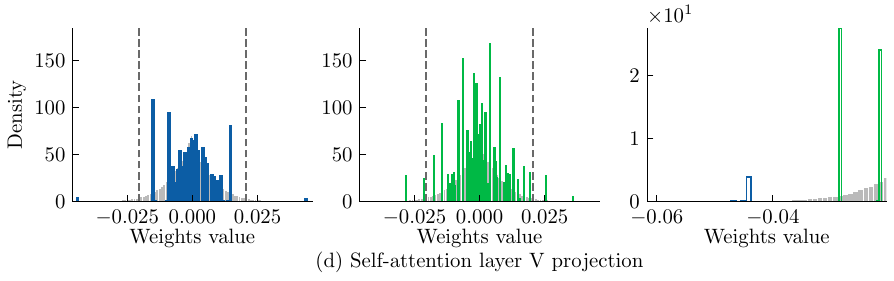}
    \caption{For the compressed weight distributions of the $K$, $O$, $Q$, and $V$ matrices in the self-attention layer of the Llama3.2-1B model, \method using the outlier-aware window strategy (\textbf{Left}) more effectively preserves the characteristics of the full-precision weight distribution compared to the equal-sized window strategy (\textbf{Middle}). This improvement is particularly noticeable in the left tail region, as highlighted in (\textbf{Right}).}
    \label{fig:attention-compare-extend}
\end{figure}

\end{document}